\newcommand{\at}[2][]{#1|_{#2}}
\theoremstyle{plain}
\newtheorem{theorem}{Theorem}
\theoremstyle{definition}
\newtheorem{definition}[theorem]{Definition}
\theoremstyle{remark}
\newcommand{\spara}[1]{\noindent{\bf #1}}
\DeclareMathOperator*{\argmin}{arg\,min}
\newcommand{\shellcmd}[1]{\\\indent\indent\texttt{\footnotesize\# #1}\\}
\def\shellcmd*#1{\indent\indent\texttt{\footnotesize\# #1}\\}
\newcommand{\N}{\mathbb{N}}
\newcommand{\R}{\mathbb{R}}
\newcommand{\C}{\mathbb{C}}
\newcommand{\Z}{\mathbb{Z}}
\newcommand{\I}{\mathbf{I}}
\newcommand{\K}{\mathcal{K}}
\newcommand{\positivefrequency}{\mathcal{F}}
\newcommand{\COVT}{\widehat{\mathsf{C}}}
\newcommand{\PSDT}{\widehat{\mathsf{F}}}
\newcommand{\periodogramT}{\mathring{\mathsf{F}}}
\newcommand{\periodogramM}{\mathring{\mathbf{F}}}
\newcommand{\inPSDT}{\widetilde{\mathsf{F}}}
\newcommand{\inIPSDT}{\widetilde{\mathsf{P}}}
\newcommand{\IPSDT}{\widehat{\mathsf{P}}}
\newcommand{\COVM}{\widehat{\mathbf{C}}}
\newcommand{\PSDM}{\widehat{\mathbf{F}}}
\newcommand{\IPSDM}{\widehat{\mathbf{P}}}
\newcommand{\inPSDM}{\widetilde{\mathbf{F}}}
\newcommand{\inIPSDM}{\widetilde{\mathbf{P}}}
\newcommand{\inRM}{\widetilde{\mathbf{R}}}
\newcommand{\RT}{\widehat{\mathsf{R}}}
\newcommand{\RM}{\widehat{\mathbf{R}}}
\newcommand{\vF}{\underline{\widehat{\mathbf{f}}}}
\newcommand{\vP}{\underline{\widehat{\mathbf{p}}}}
\newcommand{\vI}{\underline{\mathbf{i}}}
\newcommand{\vV}{\underline{\widehat{\mathbf{v}}}}
\newcommand{\vU}{\underline{\widehat{\mathbf{u}}}}
\newcommand{\vX}{\underline{\widehat{\mathbf{x}}}}
\newcommand{\vW}{\underline{\widehat{\mathbf{w}}}}
\newcommand{\vL}{\underline{\widehat{\mathbf{l}}}}
\newcommand{\W}{\widehat{\mathbf{W}}}
\newcommand{\Ll}{\widehat{\mathbf{L}}}
\newcommand{\vZ}{\underline{\mathbf{z}}}
\newcommand{\SELA}{\mathbf{S}_1}
\newcommand{\SEL}{\mathbf{S}_2}
\newcommand{\mO}{\mathbf{O}}
\newcommand{\vO}{\underline{\mathbf{o}}}
\newcommand{\vFtilde}{\underline{\widetilde{\mathbf{f}}}}
\newcommand{\Sp}{\mathcal{S}_{++}^N}
\newcommand{\Se}{\mathcal{S}_{\epsilon}^N}
\newcommand{\vSe}{\underline{\mathcal{S}}_{\epsilon}^N}
\newcommand{\Dtilde}{\widetilde{\mathbf{D}}}
\newcommand{\Dhat}{\widehat{\mathbf{D}}}
\newcommand{\conjugateT}{\mathsf{H}}
\begin{document}

\title{Learning Multi-Frequency Partial Correlation Graphs\vspace{.3cm}}

\author{Gabriele D'Acunto, Paolo Di Lorenzo~\IEEEmembership{Senior Member,~IEEE}, Francesco Bonchi,\\ Stefania Sardellitti~\IEEEmembership{Senior Member,~IEEE}, Sergio Barbarossa~\IEEEmembership{Fellow,~IEEE} \vspace{-.4cm}
\thanks{Gabriele D'Acunto is with the Department of Computer, Control, and Management Engineering, Sapienza University of Rome, 00185, Italy, and also with Centai Institute, Turin, Italy (e-mail: gabriele.dacunto@uniroma1.it). Francesco Bonchi is with Centai Institute, Turin, Italy, and also with Eurecat (Technological Center of Catalonia), Barcelona, Spain (e-mail: francesco.bonchi@centai.eu). Paolo Di Lorenzo and Sergio Barbarossa are with the Department of Information Engineering, Electronics, and Telecommunications, Sapienza University of Rome, 00184 Rome, Italy (e-mails: paolo.dilorenzo@uniroma1.it; sergio.barbarossa@uniroma1.it).
Stefania Sardellitti is with the Faculty of Engineering in Computer Science, Universitas Mercatorum, University of Italian Chambers of Commerce, 00186 Rome, Italy (e-mail: stefania.sardellitti@unimercatorum.it).
This work was partially supported by the European Union under the Italian National Recovery and Resilience Plan (NRRP)
of NextGenerationEU, partnership on `` Telecommunications of the Future'' (PE00000001 - program `` RESTART'').
}}

\markboth{IEEE TRANSACTIONS ON SIGNAL PROCESSING, 2024}%
{D'Acunto \MakeLowercase{\textit{et al.}}: Learning Multi-Frequency Partial Correlation Graphs}

\maketitle

\begin{abstract}
Despite the large research effort devoted to learning dependencies between time series, the state of the art still faces a major limitation: existing methods learn partial correlations but fail to discriminate across distinct frequency bands.
Motivated by many applications in which this differentiation is pivotal, we overcome this limitation by learning a block-sparse, frequency-dependent, partial correlation graph, in which layers correspond to different frequency bands, and partial correlations can occur over just a few layers.
To this aim, we formulate and solve two nonconvex learning problems: the first has a closed-form solution and is suitable when there is prior knowledge about the number of partial correlations; the second hinges on an iterative solution based on successive convex approximation, and is effective for the general case where no prior knowledge is available.
Numerical results on synthetic data show that the proposed methods outperform the current state of the art.
Finally, the analysis of financial time series confirms that partial correlations exist only within a few frequency bands, underscoring how our methods enable the gaining of valuable insights that would be undetected without discriminating along the frequency domain.
\end{abstract}

\begin{IEEEkeywords}
Partial correlation graph, multi-frequency, block-sparsity, nonconvex optimization.
\end{IEEEkeywords}


\section{Introduction}\label{sec:intro}
\IEEEPARstart{L}{earning} dependencies between time series is a fundamental data-analysis task with widespread application across many domains.
For instance, in \emph{finance}, models for portfolio allocation and risk assessment~\cite{markowitz1952} rely on the study of linear dependencies between the time series of asset returns.
These dependencies may vary not only over time, but also over multiple temporal resolutions, or frequency bands, which have different importance depending on the investment horizon of the portfolio manager.
Similarly, in \emph{neuroscience}, fMRI scanning of the brain measures time series of neural activity in different brain regions of interest (ROIs). A large and growing body of research~\cite{bullmore2009complex} tackles the study of the brain as a network -- or better, a correlation graph -- where an arc between two ROIs (nodes) is created by learning the linear dependencies of the time series associated with the two ROIs.
Also in this case, such dependencies might occur at different temporal resolutions~\cite{salvador2005undirected,jacobs2007brain,ide2017detrended}.
Consider, for example, resting-state fMRI, where 
although the brain is at rest, it exhibits a rich functional activity composed of fluctuations over low-frequency bands in blood oxygen level-dependent signals (see \cite{termenon2016reliability}, and references therein).
When instead the brain is subject to stimuli or performs any tasks, the related functional connectivity occurs at different frequency bands~\cite{ciuciu2014interplay,ide2018time}.
Thus, the ability to capture conditional dependencies occurring at frequency bands relevant to the application context is crucial in retrieving conditional dependencies between brain regions related to different brain states.
The importance of this learning problem is motivated also by other application domains, such as \emph{biology}~\cite{besserve2010causal}, \emph{climatology}~\cite{gu2011precipitation}, and \emph{industrial process monitoring}~\cite{badwe2009detection} (cf. \Cref{subsec:findata,subsec:MPMdata,subsec:Climatedata,subsec:Biodata}).

\spara{Related works.} Learning \emph{linear conditional dependencies} among time series is a well-known problem in statistical learning \cite{songsiri2010graphical,wilson2015models,shrivastava2022methods}. Given a set of time series, the goal is to assess the dependencies between any pair of them, conditioned on the linear effects of others. These dependencies, referred to hereinafter as \emph{partial correlations}, are typically represented through a \emph{partial correlation graph} (PCG), where the time series are the nodes, and the partial correlations are undirected arcs.
Here, the lack of an arc between two nodes indicates that the corresponding time series are linearly statistically independent, conditioned on all the possible instantaneous and lagged linear effects of the other time series.
Partial correlations between time series relate to the zeros of the inverse of the \emph{cross-spectral density} (CSD) matrices, as established in Theorem 2.4 in the seminal work of~\cite{dahlhaus2000graphical}.
The information provided by the inverse CSD for time series is analogous to that provided by the precision matrix for i.i.d. random variables.
This analogy has contributed to the development of techniques that generalize the results obtained for the precision matrix~\cite{banerjee2008model,kolar2010,oh2014optimization,belilovsky2017learning,lugosi2021learning} to the time series context.
Some authors propose shrinkage estimators for the CSD matrices motivated by applications in neuroscience \cite{fiecas2011generalized,fiecas2014data,schneider2016partial}, building upon the shrinkage framework proposed by \cite{bohm2009shrinkage} for data-driven $\ell_2$-penalised estimation of the spectral density matrix. Another stream of research focuses on learning the PCG under sparsity constraints. Specifically, the paper in \cite{fiecas2019spectral} studies a fixed-sample high-dimensional setting, and proposes a method related to the method of constrained $\ell_1$-minimization for inverse matrix estimation (CLIME,~\cite{cai2011constrained}). Other approaches~\cite{jung2015graphical,foti2016sparse,dallakyan2022time,tugnait2022sparse,deb2024regularized} leverage the \emph{Whittle approximation} (WA,~\cite{whittle1953analysis,whittle1957curve}) for stationary processes. 
More recently,~\cite{krampe2024frequency} proposes a nonparametric estimator for the \emph{partial spectral coherence} (which depends on the inverse CSD, cf. \Cref{sec:background}), providing also a procedure to assess whether the latter exceeds a user-specified threshold on a frequency band of interest. 

Regarding these previous works, we highlight two major limitations. Firstly, existing methodologies focus on learning PCGs without the possibility of discriminating across different frequency bands, while this is important in many applications (as discussed above). 
Secondly, existing methods take estimated CSD matrices as input and keep them unaltered during learning. Thus, the goodness of the solution heavily depends on the accuracy of the CSD matrices estimation.

\spara{Contributions.}  This paper proposes methods to learn partial correlations between time series across multiple frequency bands, overcoming the three major limitations of the state of the art described above. For what concerns the first limitation, we propose the learning of a \emph{frequency-dependent PCG}, where different layers correspond to different frequency bands, and where partial correlations can possibly occur only over some frequency bands.
To overcome the second limitation, we introduce a novel methodology that jointly learns the CSD matrices and their inverses, without specifically hinging on some predefined CSD estimators.
To be more specific, our proposal comprises two methods based on different assumptions. For the case where prior domain knowledge about the number of partial correlations between time series is available, we expose a problem formulation (\Cref{sec:nonconvex_close}) that has a closed-form solution and builds upon theoretical results in the compressed sensing literature~\cite{baraniuk2010model}. For the general case where such prior knowledge is not available, we formulate an optimization problem (\Cref{sec:nonconvex}) to jointly learn the CSD matrices and their inverses, and we devise an iterative solution of this problem based on successive convex approximation methods (SCA,~\cite{scutari2018parallel}). In the rest of the paper, for simplicity, we dub the former method CF (for ``closed-form'') and the latter method IA (for ``iterative approximation''). Our methods have broad applicability as they are not limited by any particular statistical model, enabling a more general approach for learning partial correlations between time series across different frequency bands. Our experimental assessment on synthetic data (\Cref{sec:synth_exp}) demonstrates the superiority of our proposals over the baselines. We also present a real-world case study in the financial domain (\Cref{sec:rwdata}), confirming that in real-world time series, partial correlations might either concentrate at a certain frequency band or spread across multiple frequencies.
This highlights the importance of learning block-sparse multi-frequency PCGs and shows that our methodology can help in extracting richer information from data.
Overall, our proposals offer an important contribution to the field of time series analysis and have potential for applications in various domains.
The JAX~\cite{jax2018github} implementation of our algorithms, along with the material for results reproduction is accessible \href{https://github.com/OfficiallyDAC/BSPCG}{here}.

\noindent\spara{Roadmap.}
The paper is organized as follows.
\Cref{sec:background} introduces key concepts while \Cref{sec:preliminaries} the spectral properties of interest and their naive estimators. Next, \Cref{sec:probform} describes the learning problem, tackled through the formulation and solution of two nonconvex problems, detailed in \Cref{sec:nonconvex_close,sec:nonconvex}, respectively.
Then, \Cref{sec:synth_exp} addresses the empirical assessment of the proposed algorithms on synthetic data, while \Cref{sec:rwdata} outlines practical scenarios in which our methods are suitable,
but not limited to, and showcases the application on financial time series.
Finally, \Cref{sec:conclusion} draws the conclusions.

\spara{Notation.}
Scalars are lowercase, $a$, vectors are lowercase bold, $\mathbf{a}$, matrices are uppercase bold, $\mathbf{A}$, and tensors are uppercase sans serif, $\mathsf{A}$. The set of integers from $1$ to $N$ is denoted by $[N]$, and $[N]_0$ if zero is included. The floor of $T \in \R$ is $\lfloor T \rfloor$.
The imaginary unit is $\iota$.
The conjugate of $\mathbf{a}$ is $\mathbf{a}^*$, and the conjugate transpose is $\mathbf{a}^\conjugateT$ (the same for matrices).
The complex signum is $\mathrm{sign}(\mathbf{a})=e^{\iota\,\arg \mathbf{a}}$.
$\I_M$ is the identity matrix of size $M \!\times\! M$, $\mathbf{J}_M$ is the $M \!\times\! M$ matrix of ones, $\mathbf{J}_M^L$ is the $M \!\times\! M$ strictly lower triangular matrix of ones. 
The entry indexed by row $i$ and column $j$ is $a_{ij}=[\mathbf{A}]_{ij}$, $\mathrm{diag}(\mathbf{a})$ is the diagonal matrix having as diagonal the vector $\mathbf{a}$, and $\underline{\mathbf{a}}\!=\!\mathrm{vec}(\mathbf{A}), \, \mathbf{A} \in \C^{C \times D}$, denotes the vectorization of $\mathbf{A}$ formed by stacking the columns of $\mathbf{A}$ into a single column vector. The set of positive definite matrices over $\C^{N\times N}$ is denoted by $\mathcal{S}^N_{++}$.
The mixed $(p,q)$ norm is $\norm{\mathbf{A}}_{p,q} \!=\! \left(\sum_{d \in [D]} \norm{\mathbf{a}_d}_p^q \right)^{1/q}$, where $\{\mathbf{a}_d\}_{d\in [D]} \in \C^C$ are the columns of $\mathbf{A}$. Additionally, the $(p,0)$ norm is the number of nonzero columns of $\mathbf{A}$, the $(2,2)$ norm is the Frobenious norm $\norm{\mathbf{A}}_F$, $\norm{\mathbf{A}}_{2}$ is the largest singular value of $\mathbf{A}$, and $\norm{\mathbf{A}}_{\infty}=\max_{ij} \abs{a_{ij}}$ is the element-wise infinity norm. Given a $3$-way tensor $\mathsf{A}$ indexed by $i$, $j$, and $k$, the fiber $[\mathsf{A}]_{ij:}$ is denoted as $[\mathsf{A}]_{ij}$. The Kronecker product is $\otimes$.
Finally, given a closed nonempty convex set $\mathcal{C}$, the set indicator function $I_{\mathcal{C}}(x)$ is defined as
\begin{equation*}
    I_{\mathcal{C}}(x)=\begin{cases}
        0 & \text{if $x \in \mathcal{C}$}, \\
        +\infty & \text{otherwise.}
    \end{cases}
\end{equation*}

\section{Conditional linear independence over frequency bands}\label{sec:background}
In this section, we briefly recall the main results from \cite{dahlhaus2000graphical} about the partial correlation between multivariate time series, expressed in the frequency domain, as they will be used later on in this paper.
Consider an $N$-variate, zero-mean, weakly-stationary process $\mathbf{y}[t]=[y_{1}[t], \ldots, y_{N}[t]]^{\top} \in \R^{N}, \, t \in \mathbb{Z}$.
The autocovariance function reads as the following matrix-valued function
\begin{equation}\label{eq:autocov}
    \mathbf{C}_l=\mathbb{E} [\mathbf{y}[t+l] \mathbf{y}[t]^{\top}],
\end{equation}
with $l \in \mathbb{Z}$.
Assuming that $\sum_{l \in \mathbb{Z}}\norm{\mathbf{C}_l}_2<\infty$, the cross-spectral density function is the following matrix-valued function over $\R$:
\begin{equation}\label{eq:spectraldensity}
    \mathbf{F}_{\nu} = \frac{1}{2\pi}\sum_{l\in \mathbb{Z}} \mathbf{C}_l e^{-\iota 2 \pi \nu l},
\end{equation}
with $\nu \in [0,1]$; while we denote its inverse as $\mathbf{P}_{\nu}$.
Since $\mathbf{F}_{\nu}$ is Hermitian for all $\nu$, also $\mathbf{P}_{\nu}$ is Hermitian; furthermore, $\mathbf{F}_{\nu}=\mathbf{F}^\conjugateT_{-\nu}$ since $\mathbf{y}[t]$ is real-valued. Thus, we can focus only on $\nu \in [0, 0.5]$.
According to Theorem 2.4 in \cite{dahlhaus2000graphical}, rescaling $\mathbf{P}_{\nu}$ leads to the \emph{partial spectral coherence}, $\mathbf{R}_{\nu}=-\mathbf{D}_{\nu} \mathbf{P}_{\nu}\mathbf{D}_{\nu}$, where $\mathbf{D}_{\nu}$ is a diagonal matrix with entries $[\mathbf{P}_{\nu}]_{ii}^{\frac{1}{2}}, \, \forall i \in [N]$.
Let us consider, w.l.o.g., $y_{1}[t]$ and $y_{2}[t]$.
Define $\mathbf{y}_{12}[t]\coloneqq[y_{3}[t], \ldots, y_{N}[t]]^{\top} \in \R^{N-2}$, and consider the best linear prediction of $y_{1}[t]$ and $y_{2}[t]$ in terms of the time series in $\mathbf{y}_{12}[t]$, i.e.,
\begin{align}
    y_{1}[t] &=\! \sum_{l=-\infty}^{\infty} \mathbf{d}_1[l]^\top \mathbf{y}_{12}[t-l] \!+\! \epsilon_1[t]\!=\!(\mathbf{d}_1 \!\star\! \mathbf{y}_{12}) + \epsilon_1[t], \label{eq:firstapprox}\\
    y_2[t] &=\! \sum_{l=-\infty}^{\infty} \mathbf{d}_2[l]^\top \mathbf{y}_{12}[t-l] \!+\! \epsilon_2[t]\!=\!(\mathbf{d}_2 \!\star\! \mathbf{y}_{12}) + \epsilon_2[t];\label{eq:secondapprox}
\end{align}
where $\mathbf{d}_1[l], \, \mathbf{d}_2[l] \in \R^{N-2}$; $\epsilon_1[t], \, \epsilon_2[t] \in \R$ are possible models mismatch; and $\star$ is the convolution operation.
Now, let us denote with $\mathcal{F}_{\nu}\{\cdot\}$ the Fourier transform at frequency $\nu$.
From \eqref{eq:firstapprox}, applying the Fourier transform, we obtain:
\begin{equation}
    \begin{aligned}
        \tilde{\epsilon}_1[\nu]&=\mathcal{F}_{\nu}(\epsilon_1[t])=\mathcal{F}_{\nu}\{y_{1}[t] - (\mathbf{d}_1 \star \mathbf{y}_{12})\}=\\
        &\stackrel{(a)}= \mathcal{F}_{\nu}\{y_{1}[t]\} - \mathcal{F}_{\nu}\{(\mathbf{d}_1 \star \mathbf{y}_{12})\}=\\
        &\stackrel{(b)}= \tilde{y}_1[\nu]-\tilde{\mathbf{d}}_1[\nu]\tilde{\mathbf{y}}_{12}[\nu];
    \end{aligned}
\end{equation}
where we exploit $(a)$ the linearity of Fourier transform, and $(b)$ the convolution theorem.
Similarly, from \eqref{eq:secondapprox} we obtain:
\begin{equation}
    \tilde{\epsilon}_2[\nu] = \tilde{y}_2[\nu]-\tilde{\mathbf{d}}_2[\nu]\tilde{\mathbf{y}}_{12}[\nu].
\end{equation}
The measure of linear dependence between $\tilde{\epsilon}_1[\nu]$ and $\tilde{\epsilon}_2[\nu]$ is $f_{\tilde{\epsilon}_1\tilde{\epsilon}_2}[\nu]=\mathbb{E}[ \tilde{\epsilon}_1[\nu] \tilde{\epsilon}_2^{*}[\nu] ]$, which coincides with the partial cross-spectrum \cite{dahlhaus2000graphical}.
From eq. (2.2) in \cite{dahlhaus2000graphical}, we have that $f_{\tilde{\epsilon}_1\tilde{\epsilon}_2}[\nu]=0$ iff $[\mathbf{R}_{\nu}]_{12}=0$.
Given the definition of $\mathbf{R}_{\nu}$, $[\mathbf{P}_{\nu}]_{12}=0$ implies no correlation between $y_1[t]$ and $y_2[t]$ once they have been bandpass filtered at frequency $\nu$, after removing the linear effects of $\mathbf{y}_{12}[t]$ (for all lags $l \in \mathbb{Z}$).
Consequently, if $\{[\mathbf{P}_{\nu}]_{12}\!=\!0\}_{\nu=a}^{b}$, the \emph{bandpass-filtered representations} of $y_1[t]$ and $y_2[t]$ over the frequency band $[a,b]$ are linearly independent conditionally on $\mathbf{y}_{12}[t]$.
In practice, we deal with the finite-sample setting in which one is interested in identifying frequency bands in which the partial coherence is greater than in others where it is negligible.
These key concepts lead to \Cref{def:KPCG} in \Cref{sec:probform}.

\section{Estimation of Inverse CSD Tensor}\label{sec:preliminaries}
In this section, we introduce the basic tools for the estimation of the CSD tensor and its inverse from time series data, defined below.
Consider a set of time series, denoted as $\mathbf{Y} \in \R^{N \times T}$, where $T$ represents the number of samples in the data set and $N$ represents the number of time series. The data set comprises samples of an $N$-variate, zero-mean, and weakly stationary process, denoted as $\mathbf{y}[t] \!:= \![y_1[t], \ldots, y_N[t]]^\top$ for $t \in [T]$. Since in the finite sample setting, we denote the CSD matrix at rescaled frequency $\nu_k\!=\!k/T, \, k \! \in \! \positivefrequency, \, \positivefrequency \coloneqq \{0,\ldots,\lfloor T/2 \rfloor\}$, as $\mathbf{F}_k \!\in\! \C^{N\times N}$, and its inverse as $\mathbf{P}_k$, which we assume to exist. 
Similarly to \Cref{sec:background}, $\mathbf{F}_k$ and $\mathbf{P}_k$ are Hermitian and symmetric; thus we focus only on positive frequencies $k \! \in \! \positivefrequency$.

Let us now introduce the \emph{CSD tensor} $\mathsf{F}=\{\mathbf{F}_k\}_{k\in \positivefrequency}\in\C^{N\times N \times M}$ consisting of $M=|\mathcal{F}|=\lfloor T/2 \rfloor +1$ slices of size $N$ by $N$, where each slice represents the CSD matrix corresponding to a certain frequency.
To streamline notation, hereinafter we omit the subscript $k\in \positivefrequency$ in collections indexed by $k$.
An estimator of the CSD tensor is the periodogram, $\periodogramT \! \in \! \C^{N\times N \times M}$, defined as the discrete Fourier transform (DFT) of the sample autocovariance $\COVT$, defined as the collection of the matrices \[\COVM_l=\dfrac{1}{T} \sum_{t=0}^{T-|l|-1}(\mathbf{y}[t+l]-\bar{\mathbf{y}})(\mathbf{y}[t]-\bar{\mathbf{y}})^\top,\] for $l \in [-T+1, T-1]$, where $\bar{\mathbf{y}}=1/T \sum_{t=0}^{T-1} \mathbf{y}[t]$. Specifically, $\periodogramT=\{\periodogramM_k\}$ is the collection of the matrices $\periodogramM_k$, having entries \[\left[\periodogramM_k\right]_{ij}=\sum_{l=-T+1}^{T-1} \left[\COVM_{l}\right]_{ij} e^{-\iota 2 \pi k l /T}.\]

It is well known that the periodogram is not a consistent estimator of the CSD tensor \cite{fiecas2019spectral}. A common remedy is to smooth (i.e., average) it, also in high-dimensional fixed-sample settings (Theorem 3.1 in~\cite{fiecas2019spectral}). 
The periodogram is smoothed by convolving it with a smoothing window having smoothing span $2\varsigma+1$. The window is required to be symmetric and to have nonnegative weights summing up to one.
Typically, the smoothing window reaches its maximum at the center, and decays smoothly and fast enough as we move from the latter.  
There exist several windows satisfying the previous conditions, however, the choice of the window shape is secondary to that of the half-window size $\varsigma$.
Indeed, for making the smoothing periodogram consistent, $\varsigma\!=\!\varsigma(t)$ must satisfy (i) $\varsigma(t)\!\rightarrow \!\infty$ and (ii) $\varsigma(t)/T \!\rightarrow \!0$ as $T \!\rightarrow\! \infty$.
We denote the smoothed periodogram by $\inPSDT$.
Then, we obtain the \emph{naive} estimator of the inverse CSD tensor, $\inIPSDT=\{\inIPSDM_{k}\}$, such that $\inPSDM_{k} \inIPSDM_{k} \!=\! \I_{N}$.
Finally, we have the \emph{partial spectral coherence} estimator $\inRM_{k}\!=\!-\Dtilde_{k}\inIPSDM_{k}\Dtilde_{k}$, where $\Dtilde_{k}$ is a diagonal matrix with entries $[\inIPSDM_{k}]_{ii}^{-\frac{1}{2}}$,  $\forall i \in [N]$ ~\cite{dahlhaus2000graphical}. 
\section{Problem statement}\label{sec:probform}

\begin{figure}[t]
    \centering
    \includegraphics[width=1\columnwidth]{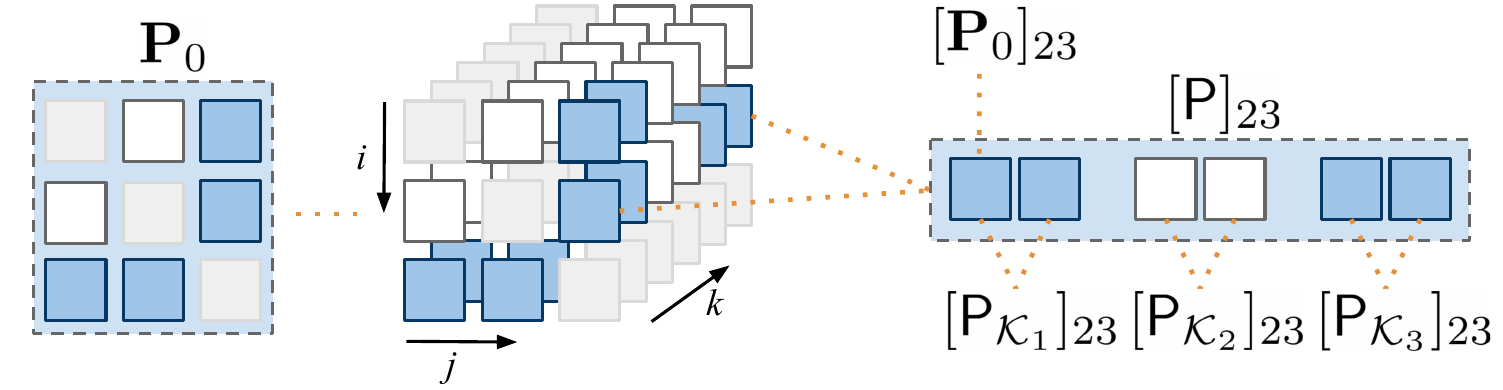}
    \caption{Inverse CSD tensor and its components. Nonzero off-diagonal entries are given in blue, white otherwise. Diagonal entries are shaded. The block-sparsity feature refers to the occurrence of partial correlations only over some $\mathcal{K}_m$. In this example, time series $i=2$ and $j=3$ are not partially correlated at the second block of frequencies $\mathcal{K}_2$, where $m \in [3]$ and each block is made of two frequencies.}
    \label{fig:probstat}
\end{figure}

In this section, we state our learning problem, introducing the definition of multi-frequency partial correlation graph, where each layer refers to a different frequency band. Consider as input the data set $\mathbf{Y} \in \R^{N \times T}$ mentioned above, and let us focus on the frequencies $k \! \in \! \positivefrequency$.
We partition the frequency range into $K$ consecutive blocks of interest denoted as $\K_m$, where $m \in [K]$ and $0\!<\!K\!<\!M$.
The starting frequency index of block $\K_m$ is denoted by $k_m$, and $\K_m$ corresponds to the frequency band $[\nu_{k_m}, \nu_{k_{m+1}})$.
We refer to the tensor collecting the inverse CSD matrices over $\K_m$ as $\mathsf{P}_{\K_m}$. Clearly, the inverse CSD tensor $\mathsf{P}\in\mathbb{C}^{N\times N\times M}$ is equivalently given by $\mathsf{P}=\{\mathbf{P}_{k}\}=\{\mathsf{P}_{\K_m}\}_{m \in [K]}$. As a pictorial example, \Cref{fig:probstat} depicts the inverse CSD tensor $\mathsf{P}$ and its components.
In light of \Cref{sec:background}, we introduce the multi-frequency partial correlation graph made of $K$ independent layers.

\begin{definition}[$K$-frequency partial correlation graph]\label{def:KPCG}
The $K$-frequency partial correlation graph ($K$-PCG) is a graph composed of $K$ independent layers, where the $m$-th layer  is an undirected graph $\mathcal{G}_m(\mathcal{V},\mathcal{E}_m)$ associated to the frequency band $\K_m$, such that $\mathcal{V}=[N]$, and $\mathcal{E}_m=\{e_{ij} \,|\, \left[\mathsf{P}_{\K_m}\right]_{i j}\neq 0 \, \, \text{for some } \nu_k \in \K_m, \ (i,j)\in [N]\times[N], i\neq j\}$.
\end{definition}

According to \Cref{def:KPCG}, the presence of an arc $e_{ij} \in \mathcal{E}_m$ depends on the values of $\left[\mathsf{P}_{\K_m}\right]_{ij}$ within $\K_m$, which must be different from zero over at least one frequency component. Thus, if the arc $e_{ij}$ is absent for every recovered graph $\mathcal{G}_m$, time series $\mathbf{y}_i$ and $\mathbf{y}_j$ are not partially correlated~\cite{dahlhaus2000graphical}. 
We remark that the absence of the arc $e_{ij}$ in a layer $\mathcal{G}_m$ means that the \emph{bandpass-filtered representations} of the time series $\mathbf{y}_i$ and $\mathbf{y}_j$ at the frequency band corresponding to the layer are not partially correlated.
We say that a $K$-PCG is block-sparse if partial correlations (i.e., arcs) exist only over some frequency bands $\K_m$. Then, driven by \Cref{def:KPCG} and the above considerations, in this work we aim to find a block-sparse graph representation associated with each frequency band in a data-driven manner, without any assumptions about the underlying statistical model. We aim to develop an approach that is robust to possible numerical fluctuations and is applicable even when $N>T$. In the sequel, we formulate this problem mathematically according to two different criteria (cf. \Cref{sec:nonconvex_close,sec:nonconvex}, respectively), imposing block-sparsity in the estimate of the inverse CSD tensor along the frequency domain.
To ease comprehension, \Cref{tab:objects} provides the main concepts, along with their description and corresponding symbols throughout the text.

\begin{table*}[t]
    \centering
    \caption{Main concepts along with their description and corresponding symbols throughout the text.}
    \resizebox{1.\textwidth}{!}{%
    \begin{tabular}{c p{0.5\textwidth} p{0.3\textwidth}}
        \toprule
        Concept& Description & Symbol \\
        \midrule
        Partial correlation graph & Graphical representation of linear conditional dependencies between multivariate time series. Nodes correspond to the time series and undirected edges to linear conditional dependencies.& PCG.\\
        \midrule
        $K$-frequency partial correlation graph& Graphical representation of linear conditional dependencies between multivariate time series at $K$ distinct frequency bands. Each band corresponds to a layer of the graph. Within each layer, nodes correspond to the bandpass-filtered time series and undirected edges to linear conditional dependencies over the associated frequency band.&$K$-PCG.\\
        \midrule
        Autocovariance& It measures the variance in and the covariance between the dimensions of a multivariate time series system, at different lags $l \in \Z$.& $\mathsf{C}=\{\mathbf{C}_l\}_{l \in \Z}$ (tensor), $\mathbf{C}_l$ (matrix), $\COVT=\{ \COVM_l\}_{l \in \Z}$ (tensor estimate), $\COVM_l$ (matrix estimate).\\
        \midrule
        Cross-spectral density & Frequency-domain analog of the covariance matrix. It characterizes the variance in and the covariance between the dimensions of a multivariate time series system attributed to oscillations in the data at different frequencies $\nu \!\in\! [0,1]$.& $\mathsf{F}=\{\mathbf{F}_k\}$ (tensor), $\mathbf{F}_k$ (matrix), $\PSDT=\{\PSDM_k\}$ (tensor estimate), $\PSDM_k$ (matrix estimate), $\vF_k=\mathrm{vec}(\PSDM_k)$.\\
        \midrule
        Inverse cross-spectral density & Inverse of the cross-spectral density matrix, used to estimate the partial spectral coherence. It is the frequency-domain analog of the precision matrix. It quantifies linear conditional dependencies attributed to variations in the oscillations of the data at different frequencies $\nu\! \in \![0,1]$.& $\mathsf{P}=\{ \mathsf{P}_{\K_m}\}_{m \in K}=\{\mathbf{P}_k\}$ (tensor), $\mathsf{P}_{\K_m}$ (tensor slices corresponding to $\K_m$), $\mathbf{P}_k$ (matrix), $\mathbf{P}_{(\K_m)}$ (tensor flattening along $\K_m)$, $\IPSDT=\{\IPSDM_k\}$ (tensor estimate), $\IPSDM_k$ (matrix estimate), $\IPSDM_{(\K_m)}$ (tensor estimate flattening along $\K_m)$,  $\vP_k=\mathrm{vec}(\IPSDM_k)$, $\vP_{(\K_m)} = \mathrm{vec}( \IPSDM_{(\K_m)})$.\\
        \midrule
        Partial spectral coherence & Rescaled version of the inverse cross-spectral density. Frequency-domain analog of the partial correlation. It measures the correlation between two time series that have been bandpass filtered at frequency $\nu$, after accounting for the linear effects of other time series in the system.& $\mathsf{R}=\{ \mathsf{R}_{\K_m}\}_{m \in K}=\{ \mathbf{R}_k \}$ (tensor), $\mathsf{R}_{\K_m}$ (tensor slices corresponding to $\K_m$),$\mathbf{R}_k$ (matrix), $\RT=\{ \RT_{\K_m}\}_{m \in K}= \{ \RM_k\}$ (tensor estimate), $\RM_k$ (matrix estimate).\\
        \bottomrule
    \end{tabular}}
    \label{tab:objects}
\end{table*}

\section{The closed-form (CF) method}\label{sec:nonconvex_close}
Let us assume that the true inverse CSD tensor $\mathsf{P}$ is block-sparse over $K$ distinct blocks. Further, we consider the number of unique nonzero off-diagonal fibers (mode-$k$) within each frequency band $\K_m$ equal to $2s_m$ for all $ m \in [K]$, with $s_m \in \N$. Note that, since the true inverse CSD is Hermitian, the number of nonzero off-diagonal fibers $[\mathsf{P}]_{ij}, \, (i, j) \in [N]\times[N],\, i\neq j,$ can only be even. As per \Cref{def:KPCG}, the tensor $\mathsf{P}$ entails a $K$-PCG that has arc sets $\mathcal{E}_m$ with cardinality $s_m$, $\forall m \in [K]$.

The rationale behind our proposal is to interpret the naive estimator $\inIPSDT$ (cf. \Cref{sec:preliminaries}) as a noisy measurement of the true block-sparse inverse CSD tensor $\mathsf{P}$.
As a consequence, if we further assume to have prior knowledge about the sparsity values $s_m$, $\forall m \in [K]$, we can cast the learning of the $K$-PCG as a \emph{block-based signal recovery} problem~\cite{baraniuk2010model}. In particular, let us focus on a specific frequency band $\K_m$. We denote the flattening (a.k.a. matricization) of the true inverse CSD tensor $\mathsf{P}_{\K_m}$, along the frequency interval $\K_m$, as $\mathbf{P}_{(\K_m)} \in \C^{|\K_m|\times N^2}$. Here, each column of the matrix $\mathbf{P}_{(\K_m)}$, i.e., $\mathbf{p}_{i+N(j-1)}$ with $(i, j) \in [N]\times[N]$, contains the entries $[\mathbf{P}_k]_{ij}$ of the inverse CSD matrices $\forall \nu_k \in \K_m$. Thus, the block-sparsity assumption on the $K$-PCG coincides with having only $2s_m$ columns of $\mathbf{P}_{(\K_m)}$ to be not entirely equal to zero (cf. \Cref{def:KPCG}). 
In addition, since the true inverse CSD is Hermitian, the columns of $\mathbf{P}_{(\K_m)}$ satisfy $\mathbf{p}_{i+N(j-1)}=\mathbf{p}_{j+N(i-1)}^*$, which means that half of the information in $\mathbf{P}_{(\K_m)}$ is redundant and can be neglected in our formulation. To this end, let us consider the selection matrix $\SELA=\mathrm{diag}(\underline{\mathbf{j}}_{N}^{L})$, where $\underline{\mathbf{j}}_{N}^{L}$ is the vectorization of $\mathbf{J}_N^L$. The product $\mathbf{P}_{(\K_m)} \SELA$ retains only the columns of $\mathbf{P}_{(\K_m)}$ corresponding to the strictly lower diagonal fibers of $\mathsf{P}_{\K_m}$, which contain all the information needed to learn the arc set $\mathcal{E}_m$ with cardinality $s_m$. 

\begin{algorithm}[t]
    \small
    \caption{CF method.}
    \label{alg:convex}
    \begin{algorithmic}[1]
       \STATE {\bfseries Input:} $\{\inIPSDM_{(\K_m)}\}_{m\in[K]}$, $\mathbf{s}=[s_1,\ldots,s_K]^\top \in \N^K$.
       \STATE {\bfseries Output:} $\{\IPSDM_{(\K_m)}\}_{m \in [K]}$.
       \STATE $\SELA \gets \mathrm{diag}(\mathrm{vec}(\mathbf{J}_{N}^L))$, $\bar{\mathbf{S}}_1 \gets \mathrm{diag}(\mathrm{vec}(\mathbf{J}_{N}^{L^\top}))$
       \STATE Initialize $\IPSDM_{(\K_m)} \gets \inIPSDM_{(\K_m)}\mathrm{diag}(\vI_N), \, \forall m \in [K]$
       \FOR{$m=1$ {\bfseries to} $K$}
       \STATE $\IPSDM_{(\K_m)}\SELA \gets  \mathrm{top-}s_m\left(\norm{\inIPSDM_{(\K_m)}\SELA}_2\right)$
       \STATE $\IPSDM_{(\K_m)}\bar{\mathbf{S}}_1 \gets \left(\IPSDM_{(\K_m)}\SELA\right)^*$
       \ENDFOR
    \end{algorithmic}
\end{algorithm}
Finally, let $\inIPSDM_{(\K_m)} \in \C^{|\K_m|\times N^2}$ be the flattening of the naive estimator $\inIPSDT_{\K_m}$. Then, we can cast our learning problem as the search for the $s_m$-block-sparse matrix $\IPSDM_{(\K_m)}\SELA$ that approximates $\inIPSDM_{(\K_m)}\SELA$, for each $\K_m, \, m \in [K]$,
\begin{equation}
    \begin{aligned}
    \min_{\IPSDM_{(\K_m)}} \quad & ||\inIPSDM_{(\K_m)}\SELA - \IPSDM_{(\K_m)}\SELA||_F^2 \\
    \textrm{subject to} \quad & ||\IPSDM_{(\K_m)}\SELA||_{2,0} \leq s_m\,.\\
    \end{aligned}
    \tag{P1}\label{eq:prob0}
\end{equation}
Problem \eqref{eq:prob0} is nonconvex due to the block-sparsity inducing constraint on the $\ell_{2,0}$ norm. Nevertheless, following the results in~\cite{baraniuk2010model,sardellitti2019graph}, \eqref{eq:prob0} admits a closed-form (globally optimal) solution. Specifically, for each $\K_m$, the best $s_m$-block-sparse approximation $\IPSDM_{(\K_m)}$ can be recovered by simply sorting the columns of $\inIPSDM_{(\K_m)}\SELA$ according to their $\ell_2$-norm and then retaining the top-$s_m$ columns. The $s_m$ selected columns are associated with the lower-diagonal fibers of $\mathsf{P}_{\K_m}$, and are sufficient to identify the arc set $\mathcal{E}_m$ with cardinality $s_m$. Finally, if needed, the remaining structure of $\IPSDM_{(\K_m)}$, i.e., the one associated with diagonal and upper-diagonal fibers of $\mathsf{P}_{\K_m}$, can be simply obtained copying the diagonal of $\inIPSDM_{(\K_m)}$ and taking the conjugate of $\IPSDM_{(\K_m)}\mathbf{S}_1$, respectively.  All the steps of the proposed method, named CF, are given in~\Cref{alg:convex}.\\
\textbf{Computational cost.} For each block of frequencies $\K_m$, we compute the $\ell_2$-norm of $N(N-1)/2$ terms, each consisting of $|\K_m|$ multiplications and $|\K_m|-1$ additions. Therefore, these operations require $\mathcal{O}(N^2|\K_m|)$ flops.  Afterward, we choose the top-$s_m$ largest terms, with a cost $\mathcal{O}(N^2 \log s_m)$. 
Hence, the cost for each block is $\mathcal{O}(N^2(|\K_m|+\log s_m)),\, m\in [K]$.

\section{The iterative approximation (IA) method}\label{sec:nonconvex}
The CF method in \Cref{sec:nonconvex_close} hinges on prior knowledge about the sparsity measure $s_m, \, \forall m \in [K]$, which is hardly available in practical scenarios. In this section, we propose a method that does not exploit such prior information. In addition, differently from the CF method in \Cref{sec:nonconvex_close}, here we do not consider the CSD tensor estimate as fixed during the learning process, but we jointly learn the CSD tensor and its inverse, which we denote as $\{\PSDM_k\}\in \mathcal{S}^N_{++}$ and $\{\IPSDM_k\}\in \mathcal{S}^N_{++}$, respectively. To this aim, since $\mathcal{S}^N_{++}$ is an open set that is difficult to handle with iterative optimization methods, we first introduce an approximation of $\mathcal{S}^N_{++}$ given by the closed set 
\begin{equation}\label{eq:Se}
    \Se=\{\mathbf{A}\in \Sp \,|\, \mathbf{a}^\conjugateT\mathbf{A}\mathbf{a}\geq\epsilon, \, \epsilon \in \R_+, \, \forall\mathbf{a} \in \C^N\}\, ,
\end{equation}
where $\epsilon>0$ can be chosen sufficiently small to well approximate $\mathcal{S}^N_{++}$. Also, let  $\vSe$ be the closed set of vectorized matrices in \eqref{eq:Se}. In second place, hinging on \Cref{def:KPCG}, we notice that only the cross-interaction fibers $[\IPSDT_{\K_m}]_{ij}, \, i \neq j$, might yield arcs in the corresponding layer of the learned $K$-PCG. Then, similarly to (\ref{eq:prob0}), we introduce the selection matrix $\SEL=\mathrm{diag}(\vO)$, $\SEL \in \{0,1\}^{N^2 \times N^2}$, being $\vO$ the vectorization of the matrix $\mO=\mathbf{J}_N - \I_N$. The product $\IPSDM_{(\K_m)}\SEL$ sets to zero the diagonal terms of the slices $\IPSDM_k, \, k\in \K_m$, and we can then enhance sparsity only over the off-diagonal fibers of $\IPSDT_{\K_m}, \, m \in [K]$.
Then, to learn the $K$-PCG, we formulate the following optimization problem:
\begin{equation}\label{eq:prob1}
\begin{aligned}
    \min_{\{\PSDM_k,\IPSDM_k \in \Se\}} &  \sum_{k\in \positivefrequency} \! \norm{\PSDM_k \IPSDM_k \! - \!\I_N}_{\infty} \!\!\! + \! \lambda \! \! \sum_{m \in [K]}\!\norm{\IPSDM_{(\K_m)}\SEL}_{2,1}\\
    \textrm{subject to} &\;\; \norm{\PSDM_k-\inPSDM_k}_F^2\leq \eta, \; \forall k \in \positivefrequency.
\end{aligned}
\tag{P2}
\end{equation}
The first term in the objective function of \eqref{eq:prob1} enforces $\IPSDM_k$ to be the inverse of $\PSDM_k$ over the frequency domain. Indeed, the first term vanishes if $\IPSDM_k$ is the inverse of $\PSDM_k$. The second term of the objective in \eqref{eq:prob1} promotes block-sparsity on $K$ frequency bands $\{\K_m\}_{m\in[K]}$ (i.e., block-sparsity on the columns of $\IPSDM_{(\K_m)}\SEL$, for all $m\in[K]$), in accordance with our assumption. Here, $\lambda \in \R_+$ is a regularization parameter. The inequality constraints in \eqref{eq:prob1} constrain each slice of the estimate of the CSD matrices to be close to the corresponding slice of the smoothed periodogram (cf. \Cref{sec:preliminaries}). As a result, the matrices $\{\PSDM_k\}$ learned by our method can deviate from the smoothed periodogram during the learning process. The magnitude of this deviation is controlled by $\eta \in \R_+$, which represents a tuning parameter. Finally, to ensure that the learned matrices are Hermitian and positive-definite, we optimize $\PSDM_k$ and $\IPSDM_k$ over $\Se$ defined in (\ref{eq:Se}). 

Unfortunately, \eqref{eq:prob1} is nonconvex due to the presence of the bi-linear terms $\sum_{k\in \positivefrequency} \! ||\PSDM_k \IPSDM_k \! - \!\I_N||_{\infty}$ in the objective function. To handle its non-convexity, in the sequel, we adopt an efficient algorithmic framework based on a version of the \emph{successive convex approximation} (SCA,~\cite{scutari2016parallel}) scheme, which can be thought as a generalization of~\cite{razaviyayn2014parallel}. The proposed algorithm finds a (local) solution of the original nonconvex problem (P2) by solving a sequence of strongly convex subproblems, where the original nonconvex objective is replaced by an appropriate (strongly) convex approximation, which is detailed in the sequel for our case. SCA is suitable for distributed optimization, ensures feasibility at each iteration, and convergence to stationary points is guaranteed under mild assumptions~\cite{scutari2016parallel}.

\subsection{Building the strongly convex surrogate problem}\label{subsec:covexsurrogate}

The first step of the SCA framework entails the definition of a proper strongly convex surrogate problem, which approximates \eqref{eq:prob1} around a given point. To this aim, let us denote the bilinear function in \eqref{eq:prob1} as: 
\begin{equation}\label{eq:fI} 
    f^I(\{\PSDM_k\},\{\IPSDM_k\})=\sum_{k\in \positivefrequency} \! \norm{\PSDM_k \IPSDM_k \! - \!\I_N}_{\infty}.    
\end{equation}
Then, we proceed designing a strongly convex surrogate for the nonconvex term $f^I(\{\PSDM_k\}, \{\IPSDM_k\})$ in (\ref{eq:fI}), satisfying some analytical conditions \cite{scutari2016parallel}. Let us define $\mathbf{Z}_k:=(\PSDM_k , \IPSDM_k )$ for $k \in \positivefrequency$, and let $\mathbf{Z}^t_k:=(\PSDM^t_k , \IPSDM^t_k )$ be the iterate at time $t$. Now, taking into account the separability over $k$ and hinging on the bilinear structure of (\ref{eq:fI}) (see \cite{scutari2016parallel}), we define the (strongly) convex surrogate function around the point $\mathbf{Z}^t_k$:
\begin{equation}\label{eq:surrogate0}
    \begin{aligned}
        &\widetilde{f}^I(\{\mathbf{Z}_k\};\{\mathbf{Z}^t_k\}) \!=\! \sum_{k\in\mathcal{F}} \Big( \norm{\PSDM_k\IPSDM_k^t \!-\! \I_N}_{\infty} + \\
        &\;\;+ \norm{\PSDM_k^t \IPSDM_k \!-\! \I_N}_{\infty}+ \dfrac{\tau}{2}\norm{\PSDM_k \!-\! \PSDM_k^t}^2 \!+\! \dfrac{\tau}{2}\norm{\IPSDM_k \!-\! \IPSDM_k^t}^2 \Big)\, ,
    \end{aligned}
\end{equation}
with $\tau \in \R_+$. It is easy to see that the surrogate in (\ref{eq:surrogate0}) is strongly convex and satisfies
$$\nabla_{\mathbf{Z}^*} f^I \at[\big]{\{\mathbf{Z}^t_k\}} = \nabla_{\mathbf{Z}^*} \widetilde{f}^I \at[\big]{\{\mathbf{Z}^t_k\}},$$
which, using the \emph{Wirtinger calculus}~\cite{fischer2005precoding} and considering that \Cref{eq:fI,eq:surrogate0} are real functions, guarantees first-order optimality conditions, i.e., every stationary point of $\widetilde{f}^I$ in (\ref{eq:surrogate0}) is also a stationary point of the nonconvex objective function $f^I$ in (\ref{eq:fI}). Let us now rewrite (\ref{eq:surrogate0}) and then (\ref{eq:prob1}) in an equivalent form, which is more convenient for our mathematical derivations. Specifically, given two matrices $\mathbf{A} \in \C^{N_1 \times N_2}$ and $\mathbf{B}\in \C^{N_2 \times N_3}$, the vectorization of their product reads as 
\begin{equation}\label{eq:vectorization}
    \mathrm{vec(\mathbf{A}\mathbf{B})}\stackrel{(a)}{=}(\I_{N_3} \otimes \mathbf{A})\underline{\mathbf{b}}\stackrel{(b)}{=}(\mathbf{B}^\top \otimes \I_{N_1})\underline{\mathbf{a}}\,.
\end{equation}
Thus, using \eqref{eq:vectorization} in (\ref{eq:surrogate0}), and exploiting the equivalence $\|\mathbf{A}\|_{\infty}=\|{\rm vec}(\mathbf{A})\|_{\infty}$ of the element-wise infinity norm, we obtain
\begin{equation}\label{eq:surrogate}
    \begin{aligned}
        &\widetilde{f}^I(\{\vZ_k\};\{\vZ^t_k\}) = \sum_{k\in\mathcal{F}} \Big(\norm{(\IPSDM_k^{t^\top} \! \otimes \! \I_{N})\vF_k \!-\! \vI_N}_{\infty}\!+\\
        &+\! \dfrac{\tau}{2}\norm{\vF_k \!-\! \vF_k^t}^2 \!\!+\! \norm{(\I_{N} \! \otimes \! \PSDM_k^t)\vP_k \!-\! \vI_N}_{\infty} \!\!\!+\! \dfrac{\tau}{2}\norm{\vP_k \!-\! \vP_k^t}^2 \Big).
    \end{aligned}
\end{equation}
Then, we conveniently rewrite the block-sparsity inducing term in the objective of ~\eqref{eq:prob1} as:
\begin{align}\label{eq:vec_block_norm}
    &\sum_{m \in [K]}\!\norm{\IPSDM_{(\K_m)}\SEL}_{2,1}
    \stackrel{(a)}{=} \sum_{m \in [K]}\!\sum_{j \in [N^2]} \!\norm{\left[ {\rm vec}\left(\IPSDM_{(\K_m)}\SEL\right)    \right]_j}_{2} \nonumber\\
    &\qquad\quad \stackrel{(b)}{=}\sum_{m \in [K]}\!\sum_{j \in [N^2]} \!\norm{\left[(\SEL \otimes \I_{|\K_m|})\vP_{(\K_m)}\right]_j}_{2}
\end{align}
where $(a)$ follows from the definition of the $\ell_{2,1}$-norm (cf. \Cref{sec:intro}), where the $j$-th block of $\left[ {\rm vec}\left(\IPSDM_{(\K_m)}\SEL\right)\right]_j$ corresponds to the $j$-th column of $\IPSDM_{(\K_m)}\SEL$, for all $j\in[N^2]$; and $(b)$ simply follows from the application of (\ref{eq:vectorization}b), having introduced $\vP_{(\K_m)}={\rm vec}(\IPSDM_{(\K_m)})\in \mathbb{C}^{|\mathcal{K}_m|N^2}$. Finally, using (\ref{eq:surrogate}) and (\ref{eq:vec_block_norm}), we apply the SCA framework replacing the original nonconvex problem ~\eqref{eq:prob1} with a sequence of strongly convex subproblems that, at each iteration $t$, read as:
\begin{equation}\label{eq:prob2}
\begin{aligned}
    \min_{\{\vF_k, \vP_k\in \vSe\}} &\widetilde{f}^I(\{\vZ_k\};\{\vZ^t_k\})+ \\
    &+\! \lambda \! \! \sum_{m \in [K]}\!\sum_{j \in [N^2]} \!\norm{\left[(\SEL \otimes \I_{|\K_m|})\vP_{(\K_m)}\right]_j}_{2}\\
    \textrm{subject to} &\;\; \norm{\PSDM_k-\inPSDM_k}_F^2\leq \eta, \;\;\; \forall k \in \positivefrequency.
\end{aligned}
\tag{$\widetilde{\text{P2}}$}
\end{equation} 
The SCA iterative procedure then follows from computing the (unique) solution of \eqref{eq:prob2}, i.e., $\{\vF^{t+1}_k\}$ and $\{\vP^{t+1}_k\}$, and then smoothing it by using a diminishing stepsize rule~\cite{scutari2016parallel}. 
Interestingly, the surrogate problem \eqref{eq:prob2} is separable over $\vF_k$ and $\vP_k$. Therefore, \eqref{eq:prob2} can be split into two convex subproblems, one involving $\{\vP_k\}$, the other $\{\vF_k\}$. These two subproblems could be solved using off-the-shelf solvers for convex optimization, e.g., \cite{grant2014cvx,diamond2016cvxpy}. However, this approach can be computationally demanding, especially when the size of the involved variables becomes moderately large. Thus, to reduce the overall computational burden of our algorithm, we solve the two subproblems {\it inexactly} by combining the SCA framework \cite{scutari2018parallel} and the \emph{alternating direction method of multipliers} (ADMM, \cite{boyd2011distributed}). Specifically, instead of finding the exact solutions of Problem \eqref{eq:prob2}, we perform an {\it inexact} update by performing one ADMM iteration on the two convex subproblems involving $\{\vP_k\}$ and $\{\vF_k\}$, respectively. The clear advantage is that the recursions of the inner ADMM are given in closed-form (cf. \Cref{alg:nonconvex}, \Cref{subsec:subproblemP,subsec:subproblemF}), and are amenable for parallel implementation, thus largely reducing the overall computation burden. Furthermore, our approach falls within the framework of inexact SCA, whose convergence properties have been studied in \cite{scutari2016parallel}. In the sequel, we will provide a detailed derivation of the (single) ADMM recursion needed to solve the two subproblems. 

\subsection{Subproblem in \texorpdfstring{$\{\vP_k\}$}{TEXT}}\label{subsec:subproblemP}
We start from the subproblem in $\{\vP_k\}$. 
From \eqref{eq:prob2},  (\ref{eq:surrogate}), and exploiting the indicator function $I_{\vSe}(\cdot)$ of \eqref{eq:Se} (useful to induce the positive definiteness of the slices $\{\IPSDM_k\}$), we have to solve
\begin{equation}\label{eq:prob2.1}
\begin{aligned}
    \min_{\substack{\{\vP_k\}}} \; & \sum_{k\in\mathcal{F}} \Big(\norm{(\I_{N} \! \otimes \! \PSDM_k^t)\vP_k \!-\! \vI_N}_{\infty} +\dfrac{\tau}{2}\norm{\vP_k \!-\! \vP_k^t}^2 \!+\\
    &\!+ I_{\vSe}(\vP_k) \!\Big) \!+\! \lambda \! \! \sum_{m \in [K]}\!\sum_{j \in [N^2]} \!\norm{\!\left[\!(\SEL \!\otimes\! \I_{|\K_m|})\vP_{(\K_m)}\!\right]_j\!}_{2}.
\end{aligned}
\tag{$\widetilde{\text{P2}}.1$}
\end{equation}
To deal with the non-smooth terms associated with the $\ell_{\infty}$-norm, the indicator function, and the sum of $\ell_{2}$-norms in \eqref{eq:prob2.1}, we introduce three (sets of) splitting variables. The first, $\{\vX_k \in \C^{N^2}\}$, handles the $\ell_{\infty}$-norm and is defined as 
\begin{equation}\label{eq:vX}
    \vX_k=(\I_{N} \otimes \PSDM_k^t)\vP_k - \vI_N, \; \forall k \in \positivefrequency.
\end{equation}
The second, $\{\vW_k \in \C^{N^2}\}$, handles the indicator function and reads as
\vspace{-.2cm}
\begin{equation}\label{eq:vW}
    \vW_k=\vP_k, \; \forall k \in \positivefrequency.
\end{equation}
The third, $\vV=[\vV_{\K_1}^\top,\ldots,\vV_{\K_K}^\top]^\top$, $\vV \in \C^{N^2 M}$, handles the sum of $\ell_{2}$-norms in \eqref{eq:prob2.1}, and is given by
\vspace{-.05cm}
\begin{equation}\label{eq:vV}
    \vV_{\K_m}=\left(\SEL \otimes \I_{|\K_m|}\right)\vP_{(\K_m)}.
\end{equation}
Now,  using \eqref{eq:vX}-\eqref{eq:vV} in \eqref{eq:prob2.1}, we obtain the equivalent problem 
\begin{equation}\label{eq:equivP}
\begin{aligned}
    \min_{ \{ \vP_k, \vX_k, \vW_k\}, \vV} &\; \sum_{k\in\mathcal{F}} \Big( \norm{\vX_k}_{\infty}\!+\! \dfrac{\tau}{2}\norm{\vP_k \!-\! \vP_k^t}^2 \!+\! I_{\vSe}(\vW_k)\Big) \!+\\
    &+\!\lambda\sum_{m \in [K]}\!\sum_{j \in [N^2]} \!\norm{\left[\vV_{\K_m}\right]_j}_{2} \\
    \textrm{subject to} \quad & \;(\I_{N} \!\otimes\! \PSDM_k^t)\vP_k \!-\! \vI_N \!-\! \vX_k\!=\!\mathbf{0}, \, \forall k \!\in\! \positivefrequency,\\
    & \; \vP_k - \vW_k = \mathbf{0}, \, \forall k \!\in\! \positivefrequency,\\
    & \;\left(\SEL \!\otimes\! \I_{|\K_m|}\right)\vP_{(\K_m)} \!-\! \vV_{\K_m}\!=\!\mathbf{0}, \, \forall m \in [K].
\end{aligned}
\tag{$\widehat{\text{P2}}.1$}
\end{equation}
The scaled augmented Lagrangian of (\ref{eq:equivP}) reads as \cite{boyd2011distributed}:
\begin{equation}\label{eq:AULP}
    \begin{aligned}
        &\mathcal{L}_{\sigma,\omega,\theta}(\{\vP_k\}, \{\vX_k\}, \{\vW_k\}, \vV, \{\bar{\boldsymbol{\mu}}_k\}, \{\bar{\boldsymbol{\omega}}_k\}, \bar{\boldsymbol{\phi}}; \{\vZ_k^t\})=\\
        &+\!\sum_{k\in\mathcal{F}} \!\Bigg(\norm{\vX_k}_{\infty}\!+\!\dfrac{\tau}{2}\norm{\vP_k \!-\! \vP_k^t}^2\!+\! I_{\vSe}(\vW_k) +\\
        &+\! \dfrac{\omega}{2}\norm{\vP_k\!\!-\!\vW_k\!\!+\!\bar{\boldsymbol{\omega}}_k}^2 \!\!\!+\!\dfrac{\sigma}{2} \norm{(\I_{N} \!\otimes\! \PSDM_k^t)\vP_k \!\! - \! \vI_N \!-\! \vX_k \!\!+\! \bar{\boldsymbol{\mu}}_k}^2\Bigg)+ \\
        &+ \! \sum_{m \in [K]}\!\sum_{j \in [N^2]} \!\Bigg(\lambda\norm{\left[\vV_{\K_m}\right]_j}_{2} \!\!+\\
        &+\dfrac{\theta}{2} \norm{\left[\left(\SEL \!\otimes\! \I_{|\K_m|}\right)\vP_{(\K_m)}\right]_j \!-\! \left[\vV_{\K_m}\right]_j \!+\! \left[\bar{\boldsymbol{\phi}}_{\K_m}\right]_j}^2\Bigg).
    \end{aligned}
\end{equation}
The additional strongly convex terms in \eqref{eq:AULP} have as penalty parameters $\sigma, \, \omega, \, \theta$, all belonging to $\R_+$, and as scaled dual variables the sequence of vectors $\{\bar{\boldsymbol{\mu}}_k \in \C^{N^2}\}$, $\{\bar{\boldsymbol{\omega}}_k \in \C^{N^2}\}$, and $\bar{\boldsymbol{\phi}}=[\bar{\boldsymbol{\phi}}_{\K_1}^\top, \ldots, \bar{\boldsymbol{\phi}}_{\K_K}^\top]^\top \in \C^{N^2 M}$, with $\bar{\boldsymbol{\phi}}_{\K_m} \in \C^{N^2|\K_m|}$. Then, ADMM proceeds by searching a saddle point of the scaled augmented Lagrangian in (\ref{eq:AULP}), through the following recursions \cite{boyd2011distributed}:
\begin{equation}\label{eq:upP}
    \begin{aligned}
        &\text{For all } k \in \positivefrequency\text{:}\\
        \vP_k^{t+1} &= \argmin_{\vP_k} \mathcal{L}_{\sigma,\omega,\theta}(\vP_k; \vX_k^t, \vW_k^t, \vV^t, \bar{\boldsymbol{\mu}}_k^t, \bar{\boldsymbol{\omega}}_k^t, \bar{\boldsymbol{\phi}}^t, \vZ_k^t),\\
        \vX_k^{t+1} &= \argmin_{\vX_k} \mathcal{L}_{\sigma,\omega,\theta}(\vX_k; \vP_k^{t+1}, \vW_k^t, \vV^t, \bar{\boldsymbol{\mu}}_k^t, \bar{\boldsymbol{\omega}}_k^t, \bar{\boldsymbol{\phi}}^t, \vZ_k^t),\\
        \vW_k^{t+1} &= \argmin_{\vW_k} \mathcal{L}_{\sigma,\omega,\theta}(\vW_k; \vP_k^{t+1}\!\!, \vX_k^{t+1}\!\!, \vV^t\!, \bar{\boldsymbol{\mu}}_k^t, \bar{\boldsymbol{\omega}}_k^t, \bar{\boldsymbol{\phi}}^t\!, \vZ_k^t),\\
        \bar{\boldsymbol{\mu}}_k^{t+1} &= \bar{\boldsymbol{\mu}}_k^{t} \!+\! \left( (\I_{N} \otimes \PSDM_k^t)\vP_k^{t+1} - \vI_N - \vX_k^{t+1} \right),\\
        \bar{\boldsymbol{\omega}}_k^{t+1} &= \bar{\boldsymbol{\omega}}_k^{t} \!+\! \left(\vP_k^{t+1}-\vW_k^{t+1}\right),\\
        &\text{For all } m \in [K]\text{:}\\
        \vV_{\K_m}^{t+1} &= \argmin_{\vV_{\K_m}} \mathcal{L}_{\sigma,\omega,\theta}(\vV_{\K_m}; \vP_k^{t+1}, \vX_k^{t+1}, \bar{\boldsymbol{\mu}}_k^{t+1}, \bar{\boldsymbol{\phi}}^t, \vZ_k^t),\\
        \bar{\boldsymbol{\phi}}_{\K_m}^{t+1} &= \bar{\boldsymbol{\phi}}_{\K_m}^{t} + \left(\left(\I_{|\K_m|}\otimes \SEL \right)\vP_{(\K_m)}^{t+1} - \vV_{\K_m}^{t+1} \right).
    \end{aligned}
    \tag{S1}
\end{equation}
Interestingly, the first, second, third, and sixth steps in (\ref{eq:upP}) can be derived in closed-form, as we show in the next paragraphs.

\subsubsection{Solution for \texorpdfstring{$\vP_k^{t+1}$}{TEXT}}\label{subsub:updateP}
The solution of the first step of \eqref{eq:upP} is made explicit in \Cref{lem:updateP}.
\begin{restatable}{lemma}{updateP}\label{lem:updateP}
    The update $\vP_k^{t+1}$ in \eqref{eq:upP} is equal to
    \begin{equation}\label{eq:updateP}
        \begin{aligned}
            &\vP_k^{t+1} = \boldsymbol{\Gamma}_1^{-1} \bigg(\tau\vP_k^{t} + \omega(\!\vW_k^t\!-\!\bar{\boldsymbol{\omega}}_k^t) +\\
            &+\!\sigma\left(\I_{N} \!\otimes\! \PSDM_k^t \right)^\conjugateT\! \left(\vI_N \!+\!\vX_k^{t} \!-\! \bar{\boldsymbol{\mu}}_k^{t}\right)\!+\!\dfrac{\theta}{2}\SEL^\top\!\left( \vV_k^{t} \!-\!\bar{\boldsymbol{\phi}}_k^{t} \right)\!\bigg)\, ,
        \end{aligned}
    \end{equation}
    with $\boldsymbol{\Gamma}_1$ defined as in \eqref{eq:Gamma1}.
\end{restatable}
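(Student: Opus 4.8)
The plan is to recognize that the $\vP_k$-subproblem (the first step of \eqref{eq:upP}) is an \emph{unconstrained} strongly convex quadratic program, so its unique minimizer is characterized by a single stationarity condition. First I would collect, from the scaled augmented Lagrangian \eqref{eq:AULP}, exactly those terms that depend on $\vP_k$ for a fixed $k \in \positivefrequency$. Crucially, because the positive-definiteness requirement has been offloaded onto the splitting variable $\vW_k$ through the indicator $I_{\vSe}(\vW_k)$, the term $I_{\vSe}(\vP_k)$ does not appear in this subproblem, which is therefore smooth. The relevant contributions are the proximal penalty $\tfrac{\tau}{2}\norm{\vP_k - \vP_k^t}^2$, the consensus penalty $\tfrac{\omega}{2}\norm{\vP_k - \vW_k^t + \bar{\boldsymbol{\omega}}_k^t}^2$, the fidelity penalty $\tfrac{\sigma}{2}\norm{(\I_{N} \otimes \PSDM_k^t)\vP_k - \vI_N - \vX_k^t + \bar{\boldsymbol{\mu}}_k^t}^2$, and the portion of the $\theta$-penalty (the block-coupling term) that involves $\vP_k$.

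Next I would impose first-order optimality via Wirtinger calculus: since $\mathcal{L}_{\sigma,\omega,\theta}$ is real-valued, the minimizer is characterized by $\nabla_{\vP_k^*}\mathcal{L}_{\sigma,\omega,\theta} = \mathbf{0}$. Differentiating each quadratic block with the rule $\nabla_{\vP_k^*}\big(\tfrac{\alpha}{2}\norm{\mathbf{A}\vP_k - \mathbf{b}}^2\big) = \alpha\,\mathbf{A}^\conjugateT(\mathbf{A}\vP_k - \mathbf{b})$ produces a linear equation in $\vP_k$. Collecting the matrices that multiply $\vP_k$ defines the Hessian $\boldsymbol{\Gamma}_1$ of \eqref{eq:Gamma1} — namely $(\tau + \omega)\I + \sigma(\I_{N} \otimes \PSDM_k^t)^\conjugateT(\I_{N} \otimes \PSDM_k^t)$ plus the quadratic contribution of the block term — while the remaining constant vectors assemble into the right-hand side $\tau\vP_k^t + \omega(\vW_k^t - \bar{\boldsymbol{\omega}}_k^t) + \sigma(\I_{N} \otimes \PSDM_k^t)^\conjugateT(\vI_N + \vX_k^t - \bar{\boldsymbol{\mu}}_k^t) + \tfrac{\theta}{2}\SEL^\top(\vV_k^t - \bar{\boldsymbol{\phi}}_k^t)$. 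Since $\boldsymbol{\Gamma}_1$ is a sum of positive-semidefinite matrices plus $(\tau + \omega)\I \succ \mathbf{0}$, it is positive-definite, hence invertible, and left-multiplying by $\boldsymbol{\Gamma}_1^{-1}$ yields the claimed closed form \eqref{eq:updateP}.

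The hard part will be the careful extraction and differentiation of the block-coupling term, since in \eqref{eq:AULP} it is written in terms of the \emph{flattened} block variable $\vP_{(\K_m)} = {\rm vec}(\IPSDM_{(\K_m)})$ and the Kronecker operator $\SEL \otimes \I_{|\K_m|}$, rather than directly in the per-frequency unknown $\vP_k$. I would use the reshaping relation between $\vP_{(\K_m)}$ and the individual $\{\vP_k : \nu_k \in \K_m\}$ together with the vectorization identity \eqref{eq:vectorization} to isolate the single-frequency slice, and observe that $\SEL = \mathrm{diag}(\vO)$ is a symmetric idempotent projector (so $\SEL^\conjugateT\SEL = \SEL$), which governs both the constant $\tfrac{\theta}{2}\SEL^\top(\vV_k^t - \bar{\boldsymbol{\phi}}_k^t)$ appearing on the right-hand side and the corresponding projector block absorbed into $\boldsymbol{\Gamma}_1$. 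Once this bookkeeping — in particular tracking the constant factor and the Kronecker/flattening correspondence — is settled, the remaining computation is routine, and strong convexity guarantees that the stationary point found is the unique global minimizer.
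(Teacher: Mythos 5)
Your proposal is correct and follows essentially the same route as the paper's proof: isolate the $\vP_k$-dependent terms of \eqref{eq:AULP} (the indicator having been offloaded onto $\vW_k$, exactly as you note), rewrite the flattened block-coupling penalty per frequency as $\tfrac{\theta}{2}\norm{\SEL\vP_k-\vV_k^t+\bar{\boldsymbol{\phi}}_k^t}^2$, impose Wirtinger stationarity, and invert the positive-definite block-diagonal $\boldsymbol{\Gamma}_1$ using $\SEL^\top\SEL=\SEL$. One detail on the "constant factor" you flagged: carrying out your differentiation rule uniformly yields coefficient $\theta$ (not $\tfrac{\theta}{2}$) on $\SEL^\top(\vV_k^t-\bar{\boldsymbol{\phi}}_k^t)$, consistent with the $\theta\SEL$ term in \eqref{eq:Gamma1}; the $\tfrac{\theta}{2}$ in the lemma statement is a typo that the paper's own proof body implicitly corrects, so your final expression should read $\theta\SEL^\top(\vV_k^t-\bar{\boldsymbol{\phi}}_k^t)$ rather than reproducing the statement's factor.
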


\begin{proof}
    From \eqref{eq:AULP} and \eqref{eq:upP}, we start rearranging the term 
    \begin{equation}\label{eq:termKm}
        \dfrac{\theta}{2}\!\sum_{m \in [K]}\sum_{j \in [N^2]}  \norm{\left[\left(\SEL\!\otimes\! \I_{|\K_m|} \right)\vP_{(\K_m)}\right]_j \!-\! \left[\vV_{\K_m}^t\right]_j \!+\! \left[\bar{\boldsymbol{\phi}}_{\K_m}^t\right]_j}^2
    \end{equation}
    to make the dependence on $\vP_k$ explicit. Specifically, let us denote the matrix form of $\vV_{\K_m}^t$ with $\widehat{\mathbf{V}}_{(\K_m)}^t \in \C^{|\K_m| \times N^2}$, and of $\bar{\boldsymbol{\phi}}_{\K_m}^t$ with $\bar{\boldsymbol{\Phi}}_{(\K_m)}^t \in \C^{|\K_m| \times N^2}$. At this point, the matrix form of $\vV^t$ reads as $\widehat{\mathbf{V}}^t=[\widehat{\mathbf{V}}_{(\K_1)}^{t^\top}, \ldots, \widehat{\mathbf{V}}_{(\K_K)}^{t^\top}]^\top$, $\widehat{\mathbf{V}}^t\in \C^{M \times N^2}$.
    Analogously, $\bar{\boldsymbol{\Phi}}^t=[\bar{\boldsymbol{\Phi}}_{(\K_1)}^{t^\top}, \ldots, \bar{\boldsymbol{\Phi}}_{(\K_K)}^{t^\top}]^\top$, $\bar{\boldsymbol{\Phi}}^t\in \C^{M \times N^2}$.
    Now, consider $\mathrm{vec}(\widehat{\mathbf{V}}^{t^\top})=[\vV_0^{t^\top},\ldots,\vV_k^{t^\top},\ldots,\vV_{M-1}^{t^\top}]^\top$, $\vV_k^t \in \C^{N^2}$; and $\mathrm{vec}(\bar{\boldsymbol{\Phi}}^{t^\top})=[\bar{\boldsymbol{\phi}}_0^{t^\top},\ldots,\bar{\boldsymbol{\phi}}_k^{t^\top},\ldots,\bar{\boldsymbol{\phi}}_{M-1}^{t^\top}]^\top$, $\bar{\boldsymbol{\phi}}_k^t \in \C^{N^2}$. 
    Hence, we conveniently rewrite \eqref{eq:termKm} as 
    \begin{equation}\label{eq:termk}
        \dfrac{\theta}{2}\sum_{k \in \positivefrequency} \norm{\SEL\vP_k \!-\! \vV_k^t \!+\! \bar{\boldsymbol{\phi}}_k^t}^2.
    \end{equation}
Thus, using (\ref{eq:termk}) in \eqref{eq:AULP}, the first step of (S1) entails the minimization of \eqref{eq:AULP} with respect to $\vP_k$, which writes as:
\begin{align}\label{eq:prob_Pk}
  &\vP_k^{t+1} = \argmin_{\vP_k} \;\dfrac{\tau}{2}\norm{\vP_k \!-\! \vP_k^t}^2 +\dfrac{\omega}{2}\norm{\vP_k\!\!-\!\vW_k^t\!\!+\!\bar{\boldsymbol{\omega}}_k^t}^2+  \nonumber\\
  &\qquad\qquad+\dfrac{\sigma}{2} \norm{(\I_{N} \!\otimes\! \PSDM_k^t)\vP_k \!\! - \! \vI_N \!-\! \vX_k^t \!\!+\! \bar{\boldsymbol{\mu}}_k^t}^2+   \nonumber\\
  &\qquad\qquad +\dfrac{\theta}{2}\sum_{k\in\mathcal{F}}\! \norm{\SEL\vP_k \!-\! \vV_k^t \!+\! \bar{\boldsymbol{\phi}}_k^t}^2.
\end{align}
Now, from \eqref{eq:prob_Pk}, according to \emph{Wirtinger calculus}~\cite{fischer2005precoding}, the stationarity condition follows from imposing the derivative of the objective of \eqref{eq:prob_Pk} w.r.t. $\vP_k^*$ equal to zero.  Mathematically, this is equivalent to:
\begin{equation*}
    \begin{aligned}
            0 &= \tau\, (\vP_k-\vP_k^{t}) + \omega(\vP_k-\vW_k^t+\bar{\boldsymbol{\omega}}_k^t)+ \\
            &+ \sigma\,(\I_{N} \otimes \PSDM_k^t)^\conjugateT \bigg( \left(\I_{N} \otimes \PSDM_k^t \right)\vP_k - \vI_N -\vX_k^{t} + \bar{\boldsymbol{\mu}}_k^{t} \bigg) + \\
            &+\theta\,\SEL^\top\left(\SEL\vP_k - \vV_k^{t} + \bar{\boldsymbol{\phi}}_k^{t} \right)\,.
    \end{aligned}
\end{equation*}
Finally, we obtain
\begin{equation*}
    \begin{aligned}
            \vP_k^{t+1} &= \boldsymbol{\Gamma}_1^{-1} \bigg(\tau\vP_k^{t} + \omega(\!\vW_k^t\!-\!\bar{\boldsymbol{\omega}}_k^t) +\\
            &+\!\sigma\left(\I_{N} \!\otimes\! \PSDM_k^t \right)^\conjugateT\! \left(\vI_N \!+\!\vX_k^{t} \!-\! \bar{\boldsymbol{\mu}}_k^{t}\right)\!+\!\theta\SEL^\top\!\left( \vV_k^{t} \!-\!\bar{\boldsymbol{\phi}}_k^{t} \right)\!\bigg)\,,
    \end{aligned}
\end{equation*}
with $\boldsymbol{\Gamma}_1$ given by
\begin{equation}\label{eq:Gamma1}
    \begin{aligned}
            \boldsymbol{\Gamma}_1 &= \!\left(\tau+\omega\right)\I_{N^2} \!+\! \sigma\! \left(\I_{N} \!\otimes\! \PSDM_k^t\right)^\conjugateT\!\!\left(\I_{N} \!\otimes\! \PSDM_k^t\right) \!+\! \theta \SEL\,,
    \end{aligned}
\end{equation}
where we exploited $\SEL^\top\SEL=\SEL$. From \eqref{eq:Gamma1}, it is easy to see that $\boldsymbol{\Gamma}_1 \in \mathcal{B}_{++}^{N^2}$, with $\mathcal{B}_{++}^{N^2}$ denoting the set of positive definite block-diagonal matrices on $\C^{N^2 \times N^2}$. Thus, the inverse of $\boldsymbol{\Gamma}_1$ exists and can be efficiently computed in a block-wise fashion. This concludes the proof of \Cref{lem:updateP}.
\end{proof}

\subsubsection{Solution for \texorpdfstring{$\vX_k^{t+1}$}{TEXT}}\label{subsub:updateX}

From (\ref{eq:AULP}), letting
\begin{equation}\label{eq:x}
    \mathbf{x}_k = \left(\I_{N}\otimes\PSDM_k^t \right)\vP_k^{t+1} - \vI_N + \bar{\boldsymbol{\mu}}_k^t \, ,    
\end{equation}
the update for $\vX_k^{t+1}$ in \eqref{eq:upP} reads as
\begin{align}\label{eq:Sol_X}
    \vX_k^{t+1} &= \;\argmin_{\vX_k} \; \dfrac{\sigma}{2} \norm{\mathbf{x}_k - \vX_k }^2 + \!\norm{\vX_k}_{\infty} \nonumber\\
    &= \;\mathsf{prox}_{\tfrac{1}{\sigma}\norm{\cdot}_\infty} \left( \mathbf{x}_k\right)\, ; 
\end{align}
where $\mathsf{prox}_{\norm{\cdot}_\infty}$ in (\ref{eq:Sol_X}) denotes the proximal operator of the infinity norm \cite{boyd2011distributed}. In particular, the result of the proximal operator in (\ref{eq:Sol_X}) can be found explicitly by hinging on the following lemma.
\begin{restatable}{lemma}{complexprox}\label{lem:complexprox}
     For any norm function $f=\norm{\cdot}$ on $\C^N$, it holds
     $$\mathsf{prox}_{\widehat{\lambda} f}(\mathbf{z})=\mathrm{sign}(\mathbf{z}) \left(\abs{\mathbf{z}}\!-\! \widehat{\lambda} \Pi_{\mathcal{B}}\left(\dfrac{\abs{\mathbf{z}}}{\widehat{\lambda}} \right)\right)\, ,$$ 
with $\mathbf{z}\in \C^N$, $\widehat{\lambda} \in \R_+$, and $\Pi_{\mathcal{B}}$ denoting the projection operator onto the unit ball of the dual norm $f^*$.
\end{restatable}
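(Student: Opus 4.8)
The plan is to combine the Moreau decomposition for norms with a phase--magnitude factorization tailored to the complex setting. I would start from the definition $\mathsf{prox}_{\widehat{\lambda} f}(\mathbf{z})=\argmin_{\mathbf{u}\in\C^N}\{\widehat{\lambda}\,f(\mathbf{u})+\tfrac{1}{2}\|\mathbf{u}-\mathbf{z}\|_2^2\}$ and recall that the convex conjugate of a norm $f$ is the indicator function $I_{\mathcal{B}}$ of the unit ball $\mathcal{B}=\{\mathbf{y}: f^*(\mathbf{y})\le 1\}$ of its dual norm $f^*$. Since the proximal operator of an indicator function is the Euclidean projection onto the set, the extended Moreau decomposition gives $\mathsf{prox}_{\widehat{\lambda} f}(\mathbf{z})=\mathbf{z}-\widehat{\lambda}\,\Pi_{\mathcal{B}}(\mathbf{z}/\widehat{\lambda})$, using that $I_{\mathcal{B}}/\widehat{\lambda}=I_{\mathcal{B}}$ for $\widehat{\lambda}>0$. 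This already isolates the dual-ball projection; it then remains to rewrite it through $\mathrm{sign}(\mathbf{z})$ and $|\mathbf{z}|$.

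The key step is to show that $\Pi_{\mathcal{B}}$ is equivariant under entrywise phase rotations. Writing $\odot$ for the entrywise product, I would argue that because the norm $f$ used here depends only on the entrywise magnitudes of its argument, so does its dual $f^*$, and hence $\mathcal{B}$ is invariant under the map $\mathbf{y}\mapsto e^{\iota\boldsymbol{\theta}}\odot\mathbf{y}$ for every real phase vector $\boldsymbol{\theta}$. As the Euclidean distance is itself invariant under such diagonal unitary rotations, and the projection onto the closed convex set $\mathcal{B}$ is unique, the two operations commute: $\Pi_{\mathcal{B}}(e^{\iota\boldsymbol{\theta}}\odot\mathbf{w})=e^{\iota\boldsymbol{\theta}}\odot\Pi_{\mathcal{B}}(\mathbf{w})$. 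Taking $e^{\iota\boldsymbol{\theta}}=\mathrm{sign}(\mathbf{z})$ and $\mathbf{w}=|\mathbf{z}|/\widehat{\lambda}$, together with the identity $\mathbf{z}=\mathrm{sign}(\mathbf{z})\odot|\mathbf{z}|$, yields $\Pi_{\mathcal{B}}(\mathbf{z}/\widehat{\lambda})=\mathrm{sign}(\mathbf{z})\odot\Pi_{\mathcal{B}}(|\mathbf{z}|/\widehat{\lambda})$. Substituting this into the Moreau decomposition and factoring $\mathrm{sign}(\mathbf{z})$ out of both terms reproduces the claimed closed form.

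I expect the phase-equivariance step to be the crux, since it is there that the complex structure really enters and that the relevant property of the norm is used: the factorization holds precisely because the norm is \emph{absolute} (invariant to entrywise phase changes), which is exactly the case for the $\ell_\infty$ norm to which the lemma is applied in \eqref{eq:Sol_X}. As a more self-contained fallback, I would instead argue directly from the prox definition: parametrizing $u_i=r_i e^{\iota\phi_i}$ and $z_i=|z_i|e^{\iota\psi_i}$ with $r_i\ge 0$, the quadratic term expands as $\sum_i (r_i^2+|z_i|^2-2\,r_i\,|z_i|\cos(\phi_i-\psi_i))$, which is minimized over the phases by $\phi_i=\psi_i$ independently of the magnitude-only penalty $\widehat{\lambda} f(\mathbf{u})$. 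Hence every minimizer has phase $\mathrm{sign}(\mathbf{z})$ and nonnegative magnitudes solving the \emph{real} proximal problem for $f$ evaluated at $|\mathbf{z}|$; applying the real Moreau decomposition to this reduced problem and reattaching the phases then gives the result. Either route reduces the remaining work to routine algebra.
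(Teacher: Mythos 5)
Your proof is correct and rests on the same key tool as the paper's: the paper's entire proof is the one-line extended Moreau decomposition $\mathsf{prox}_{\widehat{\lambda} f}(\mathbf{z})=\mathbf{z}-\widehat{\lambda}\,\Pi_{\mathcal{B}}(\mathbf{z}/\widehat{\lambda})$, followed by the sign--magnitude factorization asserted without any justification. What you add is exactly the justification the paper skips: the phase-equivariance $\Pi_{\mathcal{B}}(e^{\iota\boldsymbol{\theta}}\odot\mathbf{w})=e^{\iota\boldsymbol{\theta}}\odot\Pi_{\mathcal{B}}(\mathbf{w})$, derived from invariance of $\mathcal{B}$ under entrywise phase rotations, unitary invariance of the Euclidean distance, and uniqueness of the projection onto a closed convex set. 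In doing so you also correctly isolate the hypothesis that actually makes the identity true: $f$ (hence $f^*$, hence $\mathcal{B}$) must be \emph{absolute}, i.e., invariant to entrywise phase changes. As stated, ``any norm on $\C^N$'' is a slight overclaim --- for instance $f(\mathbf{u})=\norm{\mathbf{A}\mathbf{u}}_2$ with a generic invertible $\mathbf{A}$ is complex-homogeneous, so it is invariant only to a \emph{global} phase, and the entrywise factorization fails --- but this is harmless in context, since the lemma is invoked only for the $\ell_\infty$ norm in \eqref{eq:Sol_X} and \eqref{eq:upU} and the $\ell_2$ norm in \eqref{eq:upV}, both absolute. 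Your fallback argument (aligning phases $\phi_i=\psi_i$ directly in the prox objective and reducing to the real proximal problem at $\abs{\mathbf{z}}$) is a genuinely different and more elementary route that bypasses equivariance altogether; it is also sound, modulo the routine check that for an absolute (hence monotone) norm the reduced real prox and the dual-ball projection of a nonnegative vector remain nonnegative and real, so that the implicit constraints $r_i\geq 0$ and the restriction to $\R^N$ are inactive. Either version of your argument is strictly more complete than the proof in the paper.
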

\begin{proof}
This directly follows from Moreau decomposition~\cite{parikh2014proximal}:
\begin{equation*}
    \begin{aligned}
        \mathsf{prox}_{\widehat{\lambda} f}(\mathbf{z}) &= \mathbf{z} \!-\! \widehat{\lambda} \Pi_{\mathcal{B}}\left(\dfrac{\mathbf{z}}{\widehat{\lambda}}\right) = \mathrm{sign}(\mathbf{z}) \left(\abs{\mathbf{z}}\!-\! \widehat{\lambda} \Pi_{\mathcal{B}}\left(\dfrac{\abs{\mathbf{z}}}{\widehat{\lambda}} \right)\right)\, ; 
    \end{aligned}
\end{equation*}
with $\mathbf{z}\in \C^N$, $\widehat{\lambda} \in \R_+$, and $\Pi_{\mathcal{B}}$ being the projection operator onto the unit ball of the dual norm $f^*$.
\end{proof}
Using \Cref{lem:complexprox} in (\ref{eq:Sol_X}), we get
\begin{equation}\label{eq:upXprox}
    \vX_k^{t+1} =\mathrm{sign}(\mathbf{x}_k)\left(\abs{\mathbf{x}_k} - \dfrac{1}{\sigma} \Pi_{\mathcal{B}(\sigma)}\left( \sigma\abs{\mathbf{x}_k}\right)\right),
\end{equation}
where $\Pi_{\mathcal{B}(\sigma)}$ is the projection operator onto the $\ell_1$-norm ball of radius $\sigma$, being the $\ell_1$- the dual of the $\ell_\infty$-norm \cite{parikh2014proximal}. 

\subsubsection{Solution for \texorpdfstring{$\vW_k^{t+1}$}{TEXT}}\label{subsub:updateW}
Here, we first derive the solution in matrix form, and then we vectorize it. From \eqref{eq:AULP}, letting 
  $  \mathbf{W}_k = \IPSDM_k^{t+1}+\bar{\boldsymbol{\Omega}}_k^{t}$,
with $\bar{\boldsymbol{\Omega}}_k^{t}$ being the $N \times N$ matrix representation of $\bar{\boldsymbol{\omega}}_k^{t}$, the update for $\W_k^{t+1}$ in \eqref{eq:upP} is given by
\begin{equation}\label{eq:probW}
    \begin{aligned}
        \W_k^{t+1} &= \argmin_{\W_k} \;\;\dfrac{\omega}{2}\norm{\mathbf{W}_k\!-\!\W_k}^2 \!\!+\! I_{\Se}(\W_k)\\
        &= \argmin_{\W_k \in \Se} \;\;\norm{\mathbf{W}_k-\W_k}^2.
    \end{aligned}
\end{equation}
From \cite{parikh2014proximal}, the solution of \eqref{eq:probW} corresponds to projecting the Hermitian matrix $\bar{\mathbf{W}}_k = (\mathbf{W}_k +\mathbf{W}_k^\conjugateT)/2$ onto $\Se$, where the projection reads as 
\begin{equation}
\begin{aligned}
    & \mathring{\mathbf{W}}_k = \Pi_{\Se}(\bar{\mathbf{W}}_k) = \sum_{i\in [N]} \max(\bar{w}_i,\epsilon)\,\mathring{\mathbf{w}}_i\mathring{\mathbf{w}}^\conjugateT_i,\\
    &\text{with}\quad \bar{\mathbf{W}}_k=\sum_i \bar{w}_i\,\mathring{\mathbf{w}}_i\mathring{\mathbf{w}}^\conjugateT_i \, .
\end{aligned}
\end{equation}
Finally, we set 
\begin{equation}\label{eq:upW}
    \vW_k^{t+1}=\mathrm{vec}(\mathring{\mathbf{W}}_k) \, .    
\end{equation}

\subsubsection{Solution for \texorpdfstring{$\vV_{\K_m}^{t+1}$}{TEXT}}\label{subsub:updateV}
From (\ref{eq:AULP}), the update for $\vV_{\K_m}$ in \eqref{eq:upP} reads as 
\begin{align}\label{eq:upVfull}
        &\vV_{\K_m}^{t+1}  = \argmin_{\vV_{\K_m}}\;\; \hspace{-.1cm} \sum_{j \in [N^2]} \!\Bigg(\lambda\norm{\left[\vV_{\K_m}\right]_j}_{2} \!\!+ \nonumber\\
        &\!+\!\dfrac{\theta}{2} \norm{\left[\left(\SEL \!\otimes\! \I_{|\K_m|}\right)\vP_{(\K_m)}^{t+1}\right]_j \!-\! \left[\vV_{\K_m}\right]_j \!+\! \left[\bar{\boldsymbol{\phi}}_{\K_m}^t\right]_j}^2\Bigg)\, .
\end{align}
Let us define
\begin{equation}\label{eq:v}
    \left[\mathbf{v}_{\K_m}\right]_j = \left[\left(\SEL \otimes \I_{|\K_m|}  \right)\vP^{t+1}_{(\K_m)}\right]_{j} \!+\! \left[\bar{\boldsymbol{\phi}}^t_{\K_m}\right]_{j}.    
\end{equation}
Now, exploiting the separability of the objective in \eqref{eq:upVfull} over the index $j$, we obtain:
\begin{align}\label{eq:upV}
        \left[\vV_{\K_m}^{t+1}\right]_j  &= \argmin_{\left[\vV_{\K_m}\right]_j } \;\dfrac{\theta}{2} \norm{\left[\mathbf{v}_{\K_m}\right]_j \!-\! \left[\vV_{\K_m}\right]_j}^2\!\!\!+\lambda\,\norm{\left[\vV_{\K_m}\right]_j}_{2}\nonumber\\
        &=\mathsf{prox}_{\tfrac{\lambda}{\theta}\norm{\cdot}_2} \left( \left[\mathbf{v}_{\K_m}\right]_j \right)\, ;
\end{align}
for all $j \in [N^2]$.
Finally, exploiting Lemma \ref{lem:complexprox} in (\ref{eq:upV}), we get
\begin{equation}\label{eq:upVprox}
    \left[\vV_{\K_m}^{t+1}\right]_j = \mathrm{sign}(\left[\mathbf{v}_{\K_m}\right]_j)\mathrm{B}_{\lambda/\theta}\left( \abs{\left[\mathbf{v}_{\K_m}\right]_j} \right),
\end{equation}
where $\mathrm{B}_{\widetilde{\lambda}}(\mathbf{z})=\max\left(0,1- \widetilde{\lambda}/\norm{\mathbf{z}}_2 \right) \mathbf{z}$ is the block soft-thresholding operator\cite{parikh2014proximal}. Finally, we get $$\vV_{\K_m}^{t+1}=\left[[\vV_{\K_m}^{t+1}]_1^\top, \ldots, [\vV_{\K_m}^{t+1}]_{N^2}^\top\right]^\top.$$

\subsection{Subproblem in \texorpdfstring{$\{\vF_k\}$}{TEXT}}\label{subsec:subproblemF}
In this case, from \eqref{eq:prob2} and (\ref{eq:surrogate}), we have to solve
\begin{align}\label{eq:prob2.2}
    &\min_{\{\vF_k\}} \,  \sum_{k \in \positivefrequency} \Big(\norm{(\IPSDM_k^{t^\top} \! \otimes \! \I_{N})\vF_k \!-\! \vI_N}_{\infty} \!\!+\! \dfrac{\tau}{2}\norm{\vF_k \!-\! \vF_k^t}^2 \!\!\!+I_{\vSe}(\vF_k)\Big)\nonumber\\
    &\qquad\textrm{subject to} \;  \norm{\vF_k-\vFtilde_k}_2^2\leq \eta, \;\; \forall k \in \positivefrequency.
\tag{$\widetilde{\text{P2}}.2$}
\end{align}
Analogously to \eqref{eq:prob2.1}, we exploit $I_{\vSe}(\vF_k)$ to induce positive definiteness of the slices $\{\vF_k\}$. Now, to manage the non-smooth terms in \eqref{eq:prob2.2}, we introduce two (sets of) splitting variables. The first, $\{\vU_k\in \C^{N^2}\}$, is given by
\begin{equation}\label{eq:vU}
    \vU_k=(\IPSDM_k^{t^\top} \otimes \I_{N})\vF_k - \vI_N, \;\;\forall k \in \positivefrequency.
\end{equation}
The second, $\{\vL_k\in \C^{N^2}\}$, is defined as
\begin{equation}\label{eq:vL}
    \vL_k=\vF_k, \;\; \forall k \in \positivefrequency.
\end{equation}
Thus, using (\ref{eq:vU}) and \eqref{eq:vL} in \eqref{eq:prob2.2}, we equivalently get
\begin{equation}\label{eq:equivF}
\begin{aligned}
    \min_{\substack{\{\vF_k, \vU_k, \vL_k\}}} \; & \sum_{k \in \positivefrequency} \Big(\norm{\vU_k}_{\infty} \!+\! \dfrac{\tau}{2}\norm{\vF_k \!-\! \vF_k^t}^2+I_{\vSe}(\vF_k)\Big)\\
    \;\;\textrm{subject to} \; & \norm{\vF_k-\vFtilde_k}_2^2\leq \eta, \;\; \forall k \in \positivefrequency,\\
    &  (\IPSDM_k^{t^\top} \otimes \I_{N})\vF_k - \vI_N - \vU_k=\mathbf{0}, \;\; \forall k \in \positivefrequency,\\
    & \vF_k - \vL_k=\mathbf{0}, \;\; \forall k \in \positivefrequency.
\end{aligned}
\tag{$\widehat{\text{P2}}.2$}
\end{equation}
Considering for the moment only the equality constraints, the (scaled) augmented Lagrangian of (\ref{eq:equivF}) reads as
\begin{align}\label{eq:aulF}
        &\mathcal{L}_{\rho,\delta}(\{\vF_k\}, \{\vU_k\},\{\vL_k\}, \{\bar{\boldsymbol{\alpha}}_k\}, \{\bar{\boldsymbol{\delta}}_k\}; \{\vZ_k^t\})= \nonumber\\
        &\!\!=\sum_{k \in \positivefrequency} \Bigg( \norm{\vU_k}_{\infty} \!\!\!+\! \dfrac{\tau}{2}\norm{\vF_k \!-\! \vF_k^t}^2 \!+\!I_{\vSe}(\vL_k)+ \nonumber\\
        &\;\;+ \dfrac{\rho}{2} \norm{(\IPSDM_k^{t^\top} \!\!\otimes\! \I_{N})\vF_k \!-\! \vI_N \!-\! \vU_k \!+\! \bar{\boldsymbol{\alpha}}_k}^2\!\!+\!\dfrac{\delta}{2}\norm{\vF_k \!-\! \vL_k \!+\! \bar{\boldsymbol{\delta}}_k}^2 \Bigg)\, ,
\end{align}
where $\rho \in \R_+$ and $\delta \in \R_+$ are penalty parameters;  $\{\bar{\boldsymbol{\alpha}}_k\in \C^{N^2}\}$ and $\{\bar{\boldsymbol{\delta}}_k\in \C^{N^2}\}$ are the scaled dual variables associated with the first and second equality constraints in \eqref{eq:equivF}.
Using the ADMM principle, we proceed looking for a saddle point of the augmented Lagrangian in (\ref{eq:aulF}) that satisfies also the inequality constraints in (\ref{eq:equivF}) \cite{boyd2011distributed,boyd2004convex}. Since \eqref{eq:aulF} is fully separable over $k$, we get the recursions below for each $k \in \positivefrequency$:
\begin{equation}
    \begin{aligned}\label{eq:upF}
        \vF_k^{t+1} \!\!=&\, \arg \min_{\vF_k} \mathcal{L}_{\rho,\delta}(\vF_k; \vU_k^t,\! \vL_k^t,\! \bar{\boldsymbol{\alpha}}_k^t,\! \bar{\boldsymbol{\delta}}_k^t,\! \vZ_k^t)\\
        & \;\;\;\textrm{subject to} \quad \norm{\vF_{k} \!-\! \vFtilde_k}^{2} \leq \eta \, ;\\
        \vU_k^{t+1} \!\!=& \,\argmin_{\vU_k}\mathcal{L}_{\rho,\delta}(\vU_k; \vF_k^{t+1}\!\!, \vL_k^t,\! \bar{\boldsymbol{\alpha}}_k^t,\! \bar{\boldsymbol{\delta}}_k^t,\! \vZ_k^t), \\
        \vL_k^{t+1} \!\!=& \,\argmin_{\vL_k}\mathcal{L}_{\rho,\delta}(\vL_k; \vF_k^{t+1}\!\!, \vU_k^{t+1}\!\!,\! \bar{\boldsymbol{\alpha}}_k^t,\! \bar{\boldsymbol{\delta}}_k^t,\! \vZ_k^t),\\
        \bar{\boldsymbol{\alpha}}_k^{t+1} \!\!=& \,\bar{\boldsymbol{\alpha}}_k^{t} \!+\! \left( (\IPSDM_k^{t^\top} \otimes \I_{N})\vF_k^{t+1} \!-\! \vI_N \!-\! \vU_k^{t+1} \right),\\ 
        \bar{\boldsymbol{\delta}}_k^{t+1} \!\!=& \,\bar{\boldsymbol{\delta}}_k^{t} \!+\! \left(\vF_k^{t+1}-\vL_k^{t+1}\right).
    \end{aligned}
    \tag{S2}
\end{equation}
Interestingly, the first, second, and third updates in (\ref{eq:upF}) can be evaluated in (quasi-)closed-form, as we illustrate in the sequel.

\subsubsection{Solution for \texorpdfstring{$\vF_k^{t+1}$}{TEXT}}\label{subsub:updateF}
Starting from \eqref{eq:aulF}, the first subproblem in (\ref{eq:upF}) reads as:
\begin{align}\label{eq:minmaxF}
        \vF_k^{t+1} =& \arg \min_{\vF_k} \;\; \dfrac{\tau}{2}\norm{\vF_k \!-\! \vF_k^t}^2\!+ \nonumber\\
        &\hspace{0cm}+\!\dfrac{\rho}{2} \norm{(\IPSDM_k^{t^\top} \!\!\otimes\! \I_{N})\vF_k \!-\! \vI_N \!-\! \vU_k^t \!+\! \bar{\boldsymbol{\alpha}}_k^t}^2\!\!+\nonumber\\
        &\hspace{0cm}+\!\dfrac{\delta}{2}\norm{\vF_k \!\!-\! \vL_k^t \!\!+\! \bar{\boldsymbol{\delta}}^t}^2 \nonumber\\
        &\hspace{0cm} \textrm{subject to} \quad \norm{\vF_{k} - \vFtilde_k}^{2}\leq \eta \,.
        \tag{S2.1}
\end{align}
Starting from \eqref{eq:minmaxF}, we build the related Lagrangian
\begin{align}\label{eq:lagrangianF}
    \widetilde{\mathcal{L}}_{\rho,\delta}(\vF_k, \beta_k)=&\dfrac{\tau}{2}\norm{\vF_k \!-\! \vF_k^t}^2\!+ \nonumber\\
    &\hspace{-1cm}+\!\dfrac{\rho}{2} \norm{(\IPSDM_k^{t^\top} \!\!\otimes\! \I_{N})\vF_k \!-\! \vI_N \!-\! \vU_k^t \!+\! \bar{\boldsymbol{\alpha}}_k^t}^2\!\!+\nonumber\\
    &\hspace{-1cm}+\!\dfrac{\delta}{2}\norm{\vF_k \!\!-\! \vL_k^t \!\!+\! \bar{\boldsymbol{\delta}}^t}^2 \!\!\!+\!\beta_k \!\left(\norm{\vF_{k} \!-\! \vFtilde_k}^{2}\!\!-\!\eta \!\right) \,,
\end{align}
where $\beta_k\geq0$ is the dual variable related to the inequality constraint in \eqref{eq:minmaxF}. Thus, exploiting \eqref{eq:lagrangianF}, we compute jointly $\vF_{k}^{t+1}$ and $\beta_k^{t+1}$ by solving the Karush-Kuhn-Tucker (KKT) conditions of \eqref{eq:minmaxF}, which read as \cite{boyd2004convex}:
\begin{equation}\label{eq:kktF}
    \begin{aligned}
        & 0 \in \pdv*{\widetilde{\mathcal{L}}_{\rho,\delta}(\vF_k, \beta_k)}{\vF_k^*},\\
        & 0 \leq \eta - \norm{\vF_{k} - \vFtilde_k}^{2} \perp  \beta_k \geq 0\,.
    \end{aligned}
\end{equation}
The first condition in \eqref{eq:kktF} imposes primal optimality, while the second one is a variational inequality encompassing complementary slackness, primal and dual feasibility. From (\ref{eq:aulF}), imposing the first condition in \eqref{eq:kktF}, we get 
\begin{equation}\label{eq:stationarity_F}
\begin{aligned}
    0 &= \dfrac{\tau}{2}\left(\vF_k - \vF_k^{t}\right) + \beta_k\left(\vF_{k} - \vFtilde_k\right) + \dfrac{\delta}{2}\left(\vF_k - \vL_k^{t}+\bar{\boldsymbol{\delta}}_k^t\right) + \\
    &+ \dfrac{\rho}{2} \left(\IPSDM_k^{t^\top} \otimes \I_{N}\right)^\conjugateT \bigg( \left(\IPSDM_k^{t^\top} \otimes \I_{N}\right)\vF_k - \vI_N - \vU_k^{t} + \bar{\boldsymbol{\alpha}}_k^{t}\bigg)\,.
\end{aligned}
\end{equation}
Thus, exploiting the property of the Kronecker product $(\mathbf{A}\otimes \mathbf{B})(\mathbf{C}\otimes \mathbf{D})=(\mathbf{A}\mathbf{C})\otimes (\mathbf{B}\mathbf{D})$ in (\ref{eq:stationarity_F}), and letting 
\begin{equation}\label{eq:Gamma2}
    \boldsymbol{\Gamma}_2(\beta_k) = \I_N \otimes \bigg( \Big(\dfrac{\tau}{2} + \beta_k + \dfrac{\delta}{2} \Big) \I_{N}+\dfrac{\rho}{2} \IPSDM_k^{t^\conjugateT}\IPSDM_k^{t} \bigg) \in \mathcal{B}_{++}^{N^2}\, ,    
\end{equation}
with $\mathcal{B}_{++}^{N^2}$ denoting the set of positive semidefinite block-diagonal matrices on $\C^{N^2 \times N^2}$, we obtain
\begin{equation}\label{eq:updateF}
    \begin{aligned}
        \vF_{k}(\beta_k) &=\boldsymbol{\Gamma}_2^{-1}(\beta_k)\bigg(\dfrac{\tau}{2}\vF_k^{t} + \beta_k\vFtilde_k + \dfrac{\delta}{2}\left(\vL_k^t-\bar{\boldsymbol{\delta}}_k^{t}\right)\\
        &\;\;+\dfrac{\rho}{2} \left(\IPSDM_k^{t^\top} \otimes \I_{N}\right)^\conjugateT \left(\vI_N + \vU_k^{t} - \bar{\boldsymbol{\alpha}}_k^{t}\right) \bigg)\, .
    \end{aligned}
\end{equation}
Then, we proceed as follows. First, we check whether $\vF_{k}(0)$ satisfies the primal feasibility condition in (\ref{eq:kktF}). If it does, then we set $\vF_k^{t+1}\!\!=\!\vF_k(0)$ and $\beta_k^{t+1}\!\!=\!0$. On the contrary, if the primal feasibility is not satisfied, it means that we need to find $\beta_k^{\star}>0$ as the root of the equation $\norm{\vF_{k}(\beta^{\star}_k) \!-\! \vFtilde_k}^{2}\!=\!\eta$, which is unique since problem \eqref{eq:minmaxF} is strongly convex, and can be efficiently found using the bisection method \cite{burden2015numerical}. Finally, we set 
\begin{equation}\label{eq:upF_and_beta}
    \vF_k^{t+1}=\vF_k(\beta_k^*)\,, \; \mathrm{and}\; \beta_k^{t+1}=\beta_k^\star\,.
\end{equation}

\subsubsection{Solution for \texorpdfstring{$\vU_k^{t+1}$}{TEXT}} 
From \eqref{eq:aulF}, letting
\begin{equation}\label{eq:u}
    \mathbf{u}_k=\left(\IPSDM_k^{t^\top} \otimes \I_{N}\right)\vF_k^{t+1} - \vI_N + \bar{\boldsymbol{\alpha}}_k^t \, ;
\end{equation}
the update for $\vU_k^{t+1}$ in \eqref{eq:upF} reads as
\begin{equation}\label{eq:upU}
    \begin{aligned}
        \vU_k^{t+1}=&\argmin_{\vU_k} \dfrac{\rho}{2}\sum_k \norm{\mathbf{u}_k-\vU_k}^2+\sum_k \norm{\vU_k}_\infty\\ 
        =& \mathsf{prox}_{\tfrac{1}{\rho}\norm{\cdot}_\infty} \left( \mathbf{u}_k \right).
    \end{aligned}
\end{equation}
Hence, by resorting to \Cref{lem:complexprox}, we obtain
\begin{equation}\label{eq:upUprox}
    \vU_k^{t+1}=\mathrm{sign}(\mathbf{u})\left(\abs{\mathbf{u}} - \dfrac{1}{\rho} \Pi_{\mathcal{B}(\rho)}\left( \rho\abs{\mathbf{u}}\right)\right),
\end{equation}
where, similarly to \Cref{subsub:updateX}, $\Pi_{\mathcal{B}(\rho)}$ is the projection operator onto the $\ell_1$-norm ball of radius $\rho$.

\subsubsection{Solution for \texorpdfstring{$\vL_k^{t+1}$}{TEXT}}\label{subsub:updateL}
Analogously to \Cref{subsub:updateW}, we first derive the solution in matrix form, and then we vectorize it. From \eqref{eq:aulF}, letting 
\begin{equation}
    \mathbf{L}_k = \PSDM_k^{t+1}+\bar{\boldsymbol{\Delta}}_k^{t},
\end{equation}
with $\bar{\boldsymbol{\Delta}}_k^{t}$ being the $N \times N$ matrix representation of $\bar{\boldsymbol{\delta}}_k^{t}$, the update for $\Ll_k^{t+1}$ in \eqref{eq:upF} is
\begin{equation}\label{eq:probL}
    \begin{aligned}
        \Ll_k^{t+1} &= \argmin_{\Ll_k} \;\;\dfrac{\delta}{2}\norm{\mathbf{L}_k\!-\!\Ll_k}^2 \!\!+\! I_{\Se}(\Ll_k)\!=\\
        &= \argmin_{\Ll_k \in \Se} \;\;\norm{\mathbf{L}_k-\Ll_k}^2 \, .
    \end{aligned}
\end{equation}
Again, following \cite{parikh2014proximal}, the solution of \eqref{eq:probL} corresponds to projecting the Hermitian matrix $\bar{\mathbf{L}}_k = (\mathbf{L}_k +\mathbf{L}_k^\conjugateT)/2$ onto $\Se$, where the projection reads as 
\begin{equation}
\begin{aligned}
    &\mathring{\mathbf{L}}_k = \Pi_{\Se}(\bar{\mathbf{L}}_k) = \sum_{i\in [N]} \max(\bar{l}_i,\epsilon)\,\mathring{\mathbf{n}}_i\mathring{\mathbf{n}}^\conjugateT_i \, ,\\
    &\text{with} \quad \bar{\mathbf{L}}_k=\sum_i \bar{l}_i\,\mathring{\mathbf{n}}_i\mathring{\mathbf{n}}^\conjugateT_i \, .
\end{aligned}
\end{equation}
Finally, we set 
\begin{equation}\label{eq:upL}
    \vL_k^{t+1}=\mathrm{vec}(\mathring{\mathbf{L}}_k) \, .    
\end{equation}

\subsection{The algorithm}\label{subsec:algo}
To summarize the steps in \Cref{subsec:subproblemP,subsec:subproblemF}, by exploiting \Cref{lem:updateP,lem:complexprox} we solve all the updates, but $\vF_{k}^{t+1}$, in closed form. 
Specifically, \Cref{lem:updateP} provides the solution for $\vP_k^{t+1}$. 
Then, \Cref{lem:complexprox} is used to compute the proximal operators for $\vX_{k}^{t+1}$, $\vV_{\K_m}^{t+1}$, and $\vU_{k}^{t+1}$. 
The positive-definiteness of the solutions is enforced by projecting $\vW_{k}^{t+1}$ and $\vL_{k}^{t+1}$ onto $\vSe$. 
Finally, the updates for $\vF_{k}^{t+1}$ and $\beta_{k}^{t+1}$ are computed jointly in quasi-closed form by solving the corresponding KKT in \Cref{eq:kktF}.
All the steps of the proposed iterative procedure are detailed in \Cref{alg:nonconvex}. After the variables' initialization (cf. lines 3-18), the algorithm proceeds exploiting the ADMM recursions \eqref{eq:upP} and \eqref{eq:upF} derived in the previous paragraphs (cf. lines 20-35), while also applying a diminishing step-size rule on the updates of $\vP_k^{t+1}$ (cf. line 23) and $\vF_k^{t+1}$ (cf. line 28), as required by the SCA framework. To enable convergence with inexact updates, SCA requires a diminishing step-size rule $\xi^t$ satisfying classical stochastic approximation conditions \cite{scutari2016parallel}:
\[\xi^t\in(0,1]\,, \qquad \sum_t \xi^t=\infty\,, \qquad \sum_t (\xi^t)^2<\infty\,.\]
Specifically, here we use the stepsize sequence~\cite{scutari2016parallel}:
\begin{equation}\label{eq:stepsize}
    \xi^t=\dfrac{\xi^{t-1}+\xi_1(t)^{c_1}}{1+c_2\xi_2(t)}\,, \qquad \xi^0=1\,, \quad 0<c_1 \leq c_2<1\,;
\end{equation}
where $\xi_1(t)/\xi_2(t)\rightarrow 0$ as $t\rightarrow \infty$. For instance, in our experiments, we consider the pairs $(\xi_1(t),\xi_2(t))=(\log{t},t)$, and $(\xi_1(t),\xi_2(t))=(\log{t},\sqrt{t})$.

\begin{algorithm}[t]
    \small
   \caption{IA method.}
   \label{alg:nonconvex}
\begin{algorithmic}[1]
    \STATE {\bfseries Input:} $\inPSDT_{ijk}$, $\{k_1, k_2, \dots, k_K\}$, $\lambda$, $\tau$, $c_1$, $c_2$, \\$\tau_{abs}^p$, $\tau_{abs}^d$, $\tau_{rel}^p$, $\tau_{rel}^d$, $\epsilon$, $\mathrm{init}$ (of $\IPSDM_k$).
    \STATE {\bfseries Output:} $\{\PSDM_k\}$, $\{\IPSDM_k\}$.
   \STATE Initialize $t\gets0$, $\xi^0\gets1, \delta=\omega=\rho=\sigma=\theta \gets 1$,
   \FOR{$k=0$ {\bfseries to} $M-1$}
   \STATE Initialize $\PSDM_k \gets \inPSDM_k$, 
   \IF{$\mathrm{init}==\mathrm{identity}$}
   \STATE \quad $\IPSDM_k \gets \I_N,$
   \ELSIF{$\mathrm{init}==\mathrm{inverse}$}
   \STATE \quad $\IPSDM_k \gets \inPSDM_k^{-1},$
   \ENDIF
   \STATE Initialize $\vU_k \gets (\IPSDM_k^{t^\top} \otimes \I_{N})\vF_k - \vI_N$,
   \STATE Initialize $\vX_k \gets (\I_{N} \otimes \PSDM_k^t)\vP_k - \vI_N$,
   \STATE Initialize $\vL_k \gets \vF_k$,
   \STATE Initialize $\vW_k \gets \vP_k$,
   \STATE Initialize $\beta_k \gets 0$,
   \STATE Initialize $\bar{\boldsymbol{\alpha}}_k=\bar{\boldsymbol{\delta}}_k=\bar{\boldsymbol{\mu}}_k=\bar{\boldsymbol{\omega}}_k\gets \mathbf{0}+\mathbf{0}\iota \in \C^{N^2}$,
   \ENDFOR
   \STATE Initialize $\vV \gets \left(\SEL \otimes \I_{M}\right)\vP_{(\positivefrequency)}$, $\bar{\boldsymbol{\phi}}\gets \mathbf{0}+\mathbf{0}\iota \in \C^{N^2 M}$,
   \REPEAT
   \STATE \textbf{do in parallel}
   \STATE \quad \textbf{do in parallel} over $k \in \positivefrequency$
   \STATE \quad \quad $\vP_k^{t+1}$ $\gets$ \Cref{eq:updateP},
   \STATE \quad \quad $\vP_k^{t+1} \gets \vP_k^{t}+\xi^t\left(\vP_k^{t+1}-\vP_k^{t}\right)$, 
   \STATE \quad \quad $\vX_k^{t+1}$, $\vW_k^{t+1}$ $\gets$ \Cref{eq:upXprox,eq:upW},
   \STATE \quad \quad $\bar{\boldsymbol{\mu}}_k^{t+1}$, $\bar{\boldsymbol{\omega}}_k^{t+1}$ $\gets$ Steps (4-5) in \eqref{eq:upP},
   \STATE \quad \textbf{do in parallel} over $k \in \positivefrequency$
   \STATE \quad \quad $\vF_k^{t+1}$, $\beta_k^{t+1}$ $\gets$ \Cref{eq:upF_and_beta},
   \STATE \quad \quad $\vF_k^{t+1} \gets \vF_k^{t}+\xi^t\left(\vF_k^{t+1}-\vF_k^{t}\right)$,
   \STATE \quad \quad $\vU_k^{t+1}$, $\vL_k^{t+1}$  $\gets$ \Cref{eq:upUprox,eq:upL},
   \STATE \quad \quad $\bar{\boldsymbol{\alpha}}_k^{t+1}$, $\bar{\boldsymbol{\delta}}_k^{t+1}$ $\gets$ Steps (4-5) in \eqref{eq:upF},
   \FOR{$m=1$ {\bfseries to} $K$}
   \STATE \quad \textbf{do in parallel} over $j$
   \STATE \quad \quad $\left[\vV_{\K_m}^{t+1}\right]_j \gets $ \Cref{eq:upVprox},
   \STATE $\left[\bar{\boldsymbol{\phi}}^{t+1}\right]_{\K_m} \gets $ Step (7) in \eqref{eq:upP}
   \ENDFOR
   \STATE $t \gets t+1$
   \STATE $\xi^t \gets \dfrac{\xi^{t-1}+\xi_1(t)^{c_1}}{1+c_2\xi_2(t)}$
   \UNTIL residuals convergence
\end{algorithmic}
\end{algorithm}

\noindent\textbf{Stopping criteria.} 
The stopping criterion for the proposed algorithm follows from the primal and dual feasibility optimality conditions~\cite{boyd2011distributed}.
For each $k \in \positivefrequency$, the primal residuals, associated with constraints on the primal variables, read as:
$\mathring{\mathbf{p}}_{1,k}^{t+1} = (\I_{N} \otimes \PSDM_k^t)\vP_k^{t+1} - \vI_N - \vX_k^{t+1}$,
$\mathring{\mathbf{p}}_{2,k}^{t+1} = (\IPSDM_k^{t^\top} \otimes \I_{N})\vF_k^{t+1} - \vI_N - \vU_k^{t+1}$,
$\mathring{p}_{3,k}^{t+1} = \norm{\vF_{k}^{t+1} - \vFtilde_k}^{2} - \eta$,
$\mathring{\mathbf{p}}_{4,k}^{t+1} = \SEL\vP_k^{t+1} - \vV_k^{t+1}$,
$\mathring{\mathbf{p}}_{5,k}^{t+1} =\vP_k^{t+1}-\vW_k^{t+1}$,
$\mathring{\mathbf{p}}_{6,k}^{t+1} =\vF_k^{t+1}-\vL_k^{t+1}$;
from which we set $\boldsymbol{\Pi}_{1}^{t+1}=[\mathring{\mathbf{p}}_{1,0}^{t+1},\ldots,\mathring{\mathbf{p}}_{1,M-1}^{t+1}]$, $\boldsymbol{\Pi}_{2}^{t+1}=[\mathring{\mathbf{p}}_{2,0}^{t+1},\ldots,\mathring{\mathbf{p}}_{2,M-1}^{t+1}]$, $\boldsymbol{\Pi}_{4}^{t+1}=[\mathring{\mathbf{p}}_{4,0}^{t+1},\ldots,\mathring{\mathbf{p}}_{4,M-1}^{t+1}]$, $\boldsymbol{\Pi}_{5}^{t+1}=[\mathring{\mathbf{p}}_{5,0}^{t+1},\ldots,\mathring{\mathbf{p}}_{5,M-1}^{t+1}]$, $\boldsymbol{\Pi}_{6}^{t+1}=[\mathring{\mathbf{p}}_{6,0}^{t+1},\ldots,\mathring{\mathbf{p}}_{6,M-1}^{t+1}]$, belonging to $\C^{N^2 \times M}$; and $\mathbf{p}_{3}^{t+1}=[p_{3,0}^{t+1},\ldots,p_{3,M-1}^{t+1}] \in \R^{M}$.

The dual residuals can be obtained from the stationarity conditions. 
Specifically, $\forall k \in \positivefrequency$, given the point $\vZ_{k}^{t+1}\coloneqq(\vF_k^{t+1}, \vP_k^{t+1})$, from the stationarity condition it follows that  
\begin{equation}\label{eq:statcondP}
    0 \!\in\! \nabla_{\vP_k^*}{\mathcal{L}_{\sigma, \omega, \theta}(\vP_k ; \vX_k^t\!, \vW_k^t, \vV^t\!, \bar{\boldsymbol{\mu}}_k^t, \bar{\boldsymbol{\omega}}_k^t, \bar{\boldsymbol{\phi}}^t\!, \vZ_k^t)}\at[\bigg]{\vP_k^{t+1}}; 
\end{equation}
and
\begin{equation}\label{eq:statcondF}
    0 \in \nabla_{\vF_k^*}{\widetilde{\mathcal{L}}_{\rho, \delta}(\vF_k, \beta_k; \vU_k^t, \vL_k^t, \bar{\boldsymbol{\alpha}}_k^t, \bar{\boldsymbol{\delta}}_k^t, \vZ_k^t, \vFtilde_k)}\at[\bigg]{(\vF_k^{t+1},\beta_k^{t+1})}.
\end{equation}
From \eqref{eq:statcondP}, following the steps in \cite{boyd2011distributed}, we get
\begin{equation}\label{eq:AULdualP}
    \begin{aligned}
    0 \in & \tau(\vP_k^{t+1}-\vP_k^t) + \omega\bar{\boldsymbol{\omega}}_k^{t+1} + \sigma\left(\I_{N} \otimes \PSDM_k^t \right)^\conjugateT \bar{\boldsymbol{\mu}}_k^{t+1} +\\
    &+\theta\SEL^\top \bar{\boldsymbol{\phi}}_k^{t+1}+\omega(\vW_k^{t+1}-\vW_k^t) + \\
     & + \sigma(\I_{N} \otimes \PSDM_k^t)^\conjugateT \bigg(\vX_k^{t+1}-\vX_k^{t} \bigg) + \theta\SEL \left(\vV_k^{t+1} - \vV_k^{t}\right).
    \end{aligned}
\end{equation}
In our case, denoting with $(\vP_k^\star\!, \vX_k^\star\!, \vW_k^\star, \vV^\star\!, \bar{\boldsymbol{\mu}}_k^\star, \bar{\boldsymbol{\omega}}_k^\star, \bar{\boldsymbol{\phi}}^\star\!)$ a saddle point for the unaugmented Lagrangian for the subproblem in $\vP_k$, the dual feasibility reads as
\begin{equation}\label{eq:dualfeasP}
    \begin{aligned}
        0 \in& \tau(\vP_k^\star-\vP_k^t) \!+\! \omega\bar{\boldsymbol{\omega}}_k^\star \!+\! \sigma\left(\I_{N} \!\otimes\! \PSDM_k^t \right)^\conjugateT\!\! \bar{\boldsymbol{\mu}}_k^\star \!+\!\theta\SEL^\top \bar{\boldsymbol{\phi}}_k^\star\,.
    \end{aligned}
\end{equation}
Hence, from \eqref{eq:AULdualP}, we have that the quantity 
\begin{equation}\label{eq:dualresidP}
    \begin{aligned}
        \mathbf{d}_{1,k}^{t+1}&=\!\omega(\vW_k^{t+1}-\vW_k^t)+\sigma(\I_{N} \otimes \PSDM_k^t)^\conjugateT \!\bigg(\vX_k^{t+1}\!-\!\vX_k^{t} \bigg)\!+\\
        &+\theta\SEL \left(\vV_k^{t+1}\!-\!\vV_k^{t}\right),
    \end{aligned}
\end{equation}
can be considered as a residual for the dual feasibility condition in \eqref{eq:dualfeasP}.
By applying the same steps for the subproblem in $\vF_k$, starting from \eqref{eq:statcondF}, we get the formula for the other residual,
\begin{equation}\label{eq:dualresidF}
    \mathbf{d}_{2,k}^{t+1}=\dfrac{\delta}{2}(\vL_k^{t+1}\!\!-\vL_k^t)+\dfrac{\rho}{2}\left(\IPSDM_k^{t^\top}\! \otimes \I_{N}\right)^\conjugateT \!\! \bigg(\vU_k^{t+1}\!\!-\vU_k^{t} \bigg)\,.
\end{equation}
Since the expressions in \eqref{eq:dualresidP} and \eqref{eq:dualresidF} are specific for frequency $k$, $k \in \positivefrequency$, also in this case we set $\boldsymbol{\Delta}_{1}^{t+1}=[\mathbf{d}_{1,0}^{t+1},\ldots,\mathbf{d}_{1,M-1}^{t+1}]$, $\boldsymbol{\Delta}_{2}^{t+1}=[\mathbf{d}_{2,0}^{t+1},\ldots,\mathbf{d}_{2,M-1}^{t+1}]$, both in $\C^{N^2 \times M}$.

As $t\rightarrow \infty$, the norms of primal and dual residuals above should vanish.
According to \cite{boyd2011distributed}, the stopping criterion is met when the norms of those residuals are below certain tolerances, constituted by an absolute and a relative term.
Let us set 
$\mathbf{T}_1\!\!\!=\!\!\!\Big[(\I_{N} \!\otimes\! \PSDM_0^t)\vP_0^{t+1}, \ldots, (\I_{N} \!\otimes\! \PSDM_{M-1}^t)\vP_{M-1}^{t+1}\Big]$,
$\mathbf{T}_2\!\!\!=\!\!\!\Big[\vX_0^{t+1}, \ldots, \vX_{M-1}^{t+1}\Big]$,
$\mathbf{T}_3\!\!\!=\!\!\!\Big[(\IPSDM_0^{t^\top} \otimes \I_{N})\vF_0^{t+1}, \ldots, (\IPSDM_{M-1}^{t^\top} \otimes \I_{N})\vF_{M-1}^{t+1}\Big]$,
$\mathbf{T}_4=\Big[\vU_0^{t+1}, \ldots, \vU_{M-1}^{t+1}\Big]$,
$\mathbf{t}_5\!\!=\!\!\Big[\norm{\vF_0^{t+1}\!\!-\vFtilde_0}, \ldots, \norm{\vF_{M-1}^{t+1}-\vFtilde_{M-1}}\Big]$,
$\mathbf{T}_6\!\!\!=\!\!\!\Big[\SEL\vP_0^{t+1},$$ \ldots, \SEL\vP_{M-1}^{t+1}\Big]$,
$\mathbf{T}_7\!\!\!=\!\!\!\Big[\vV_0^{t+1}, \ldots, \vV_{M-1}^{t+1}\Big]$,
$\mathbf{T}_8\!\!\!=\!\!\!\Big[\vP_0^{t+1}, \ldots, \vP_{M-1}^{t+1}\Big]$,
$\mathbf{T}_9\!\!\!=\!\!\!\Big[\vW_0^{t+1}, \ldots, \vW_{M-1}^{t+1}\Big]$,
$\mathbf{T}_{10}\!\!\!=\!\!\!\Big[\vF_0^{t+1}, \ldots, \vF_{M-1}^{t+1}\Big]$,
$\mathbf{T}_{11}\!\!\!=\!\!\!\Big[\vL_0^{t+1}, \ldots, \vL_{M-1}^{t+1}\Big]$,
$\mathbf{T}_{12}\!\!\!=\!\!\!\Big[\bar{\boldsymbol{\omega}}_0^{t+1}, \ldots, \bar{\boldsymbol{\omega}}_{M-1}^{t+1}\Big]$,
$\mathbf{T}_{13}\!\!\!=\!\!\!\Big[\left(\I_{N} \otimes \PSDM_0^t \right)^\conjugateT \bar{\boldsymbol{\mu}}_0^{t+1}, \ldots, \left(\I_{N} \otimes \PSDM_{M-1}^t \right)^\conjugateT \bar{\boldsymbol{\mu}}_{M-1}^{t+1}\Big]$,
$\mathbf{T}_{14}\!\!\!=\!\!\!\Big[\SEL^\top \bar{\boldsymbol{\phi}}_0^{t+1}, \ldots, \SEL^\top \bar{\boldsymbol{\phi}}_{M-1}^{t+1}\Big]$,
$\mathbf{T}_{15}\!\!\!=\!\!\!\Big[\bar{\boldsymbol{\delta}}_0^{t+1}, \ldots, \bar{\boldsymbol{\delta}}_{M-1}^{t+1}\Big]$,
$\mathbf{T}_{16}\!\!\!=\!\!\!\Big[\left(\IPSDM_0^{t^\top} \otimes \I_{N}\right)^\conjugateT \bar{\boldsymbol{\alpha}}_0^{t+1}, \ldots, \left(\IPSDM_{M-1}^{t^\top} \otimes \I_{N}\right)^\conjugateT \bar{\boldsymbol{\alpha}}_{M-1}^{t+1}  \Big]$.
Hence, we define the stopping rules as composed of an absolute and a relative component.
In detail, given absolute and relative tolerances, namely (i) $\tau_{abs}^p$, $\tau_{rel}^p$ for primal residuals, and (ii) $\tau_{abs}^d$,  $\tau_{rel}^d$ for dual ones; the stopping rules read as
\begin{equation*}\label{eq:convergence}
    \begin{aligned}
        \norm{\boldsymbol{\Pi}_{1}^{t+1}} &\!<\!N\sqrt{M} \tau_{abs}^p \!+\! \tau_{rel}^p \max\{\norm{\mathbf{T}_1}, \norm{\mathbf{T}_2}, \sqrt{MN}\},\\
        \norm{\boldsymbol{\Pi}_{2}^{t+1}} &\!<\!N\sqrt{M} \tau_{abs}^p \!+\! \tau_{rel}^p \max\{\norm{\mathbf{T}_3}, \norm{\mathbf{T}_4}, \sqrt{MN}\},\\
        \norm{\mathbf{p}_{3}^{t+1}} &\!<\!N\sqrt{M} \tau_{abs}^p \!+\! \tau_{rel}^p \max\{\norm{\mathbf{t}_5},\eta\sqrt{M}\},\\
        \norm{\boldsymbol{\Pi}_{4}^{t+1}} &\!<\!N\sqrt{M} \tau_{abs}^p \!+\! \tau_{rel}^p \max\{\norm{\mathbf{T}_6}, \norm{\mathbf{T}_7}\},\\
        \norm{\boldsymbol{\Pi}_{5}^{t+1}} &\!<\!N\sqrt{M} \tau_{abs}^p \!+\! \tau_{rel}^p \max\{\norm{\mathbf{T}_8}, \norm{\mathbf{T}_9}\},\\
        \norm{\boldsymbol{\Pi}_{6}^{t+1}} &\!<\!N\sqrt{M} \tau_{abs}^p \!+\! \tau_{rel}^p \max\{\norm{\mathbf{T}_{10}}, \norm{\mathbf{T}_{11}}\},\\
        \norm{\boldsymbol{\Delta}_{1}^{t+1}} &\!<\!N\sqrt{M} \tau_{abs}^d \!+\! \tau_{rel}^d \max\{\omega\norm{\mathbf{T}_{12}},\! \sigma\norm{\mathbf{T}_{13}},\! \theta\norm{\mathbf{T}_{14}}\!\},\\
        \norm{\boldsymbol{\Delta}_{2}^{t+1}} &\!<\!N\sqrt{M} \tau_{abs}^d \!+\! \dfrac{1}{2}\tau_{rel}^d \max\{\delta\norm{\mathbf{T}_{15}}, \rho\norm{\mathbf{T}_{16}}\}.
    \end{aligned}
\end{equation*}
With regards to the values for the tolerances, they are usually set to be small, for instance $10^{-3}$.

\noindent\textbf{Computational cost.} 
The updates of the subproblems in \Cref{subsec:subproblemP,subsec:subproblemF} can be run in parallel. In addition, each subproblem is further parallelizable over $k \in \positivefrequency$.
With reference to \Cref{alg:nonconvex}, computing the update in line 22 involves the inversion of $\boldsymbol{\Gamma}_1$ in \eqref{eq:Gamma1}, i.e., $N$ inversions of $N\times N$ matrices, costing $\mathcal{O}(N^4)$ flops. 
Additionally, we have block-diagonal matrix-vector multiplications, requiring $\mathcal{O}(N^3)$ flops.
Afterwards, line 23 requires $\mathcal{O}(N^2)$ operations.
Hence, overall we need $\mathcal{O}(N^4)$ flops for the update of $\vP_k^{t+1}$ in lines 22-23.
Concerning line 24, the update of $\vX_k$ in \eqref{eq:upXprox} requires the computation of $\mathbf{x}_k$ in \eqref{eq:x}, costing $\mathcal{O}(N^3)$ flops, and the projection onto the $\ell_1$-norm ball, which is linear in the number of terms and thus costs $\mathcal{O}(N^2)$.
Therefore, the cost is $\mathcal{O}(N^3)$.
Furthermore, the update of $\vW_k$ given in \eqref{eq:upW} involves the eigedecomposition of an $N\times N$ matrix, which costs $\mathcal{O}(N^3)$.
Next, in line 25, the updates of $\bar{\boldsymbol{\mu}}_k$ and $\bar{\boldsymbol{\omega}}_k$ are linear in the number of terms, costing $\mathcal{O}(N^2)$ flops each.  
Therefore, the block of lines 21-25 requires asymptotically $\mathcal{O}(N^4)$ flops.

Subsequently, the updates for $\vF_k$ and $\beta_k$ in line 27 involve the solution of the KKT conditions in \eqref{eq:kktF} via the bisection method.
Then, the cost depends on the number of bisection iterations, where each evaluation of \eqref{eq:updateF} involves the inversion of $\boldsymbol{\Gamma}_2$ in \eqref{eq:Gamma2}, which costs $\mathcal{O}(N^3)$ since $\boldsymbol{\Gamma}_2$ is block-diagonal with equal blocks of size $N \times N$.
Additionally, we have block-diagonal matrix-vector multiplications, requiring $\mathcal{O}(N^3)$ flops as above.
Hence, the overall cost for evaluating \eqref{eq:updateF} is $\mathcal{O}(N^3)$.
At this point, the smoothing in line 28 requires $\mathcal{O}(N^2)$ operations.
Next, with regards to line 29, the update of $\vU_k$ in \eqref{eq:upUprox} requires computing $\mathbf{u}$ in \eqref{eq:u}, which costs $\mathcal{O}(N^3)$ flops, and subsequently the projection onto the $\ell_1$-norm ball, which again costs $\mathcal{O}(N^2)$.
Besides, the update of $\vL_k$ given in \eqref{eq:upL} involves the eigedecomposition of an $N\times N$ matrix, which requires $\mathcal{O}(N^3)$ operations.
Then, the updates of $\bar{\boldsymbol{\mu}}_k$ and $\bar{\boldsymbol{\omega}}_k$ in line 30 require $\mathcal{O}(N^2)$ flops each.
Assuming that the number of bisection iterations is small, the block of lines 26-30 asymptotically costs $\mathcal{O}(N^3)$ flops. Next, in line 33, the update of $[\vV_{\K_m}]_j$ in \eqref{eq:upVprox} involves the computation of $[\mathbf{v}_{\K_m}]_j$ in \eqref{eq:v}. Now, since $\left(\I_{|\K_m|}\otimes \SEL \right)$ is diagonal, the latter computation costs $\mathcal{O}(|\K_m|N^2)$ flops. Finally, the update of $\left[\bar{\boldsymbol{\phi}}\right]_{\K_m}$ in line 34 is linear in the number of terms, thus costing $\mathcal{O}(|\K_m|N^2)$.
Hence, for each $m \!\in\! [K]$, lines 32-34 have an asymptotic cost of $\mathcal{O}(|\K_m|N^2)$ flops.

In summary, asymptotically the computational cost of our algorithm is dominated by the updates of $\vP_{k}^{t+1}$ and $\vF_{k}^{t+1}$, that we run in parallel.
In case the number of bisection steps $N_{b}$ is lower than $N$, then the most demanding update is that of $\vP_{k}^{t+1}$, costing $\mathcal{O}(N^4)$ flops.
Vice versa, the most demanding will be that of $\vF_{k}^{t+1}$, costing $\mathcal{O}(N_b N^3)$ flops. 

\section{Experiments on synthetic data}\label{sec:synth_exp}

\begin{figure}[htbp]
    \centering
    \includegraphics[width=1\columnwidth]{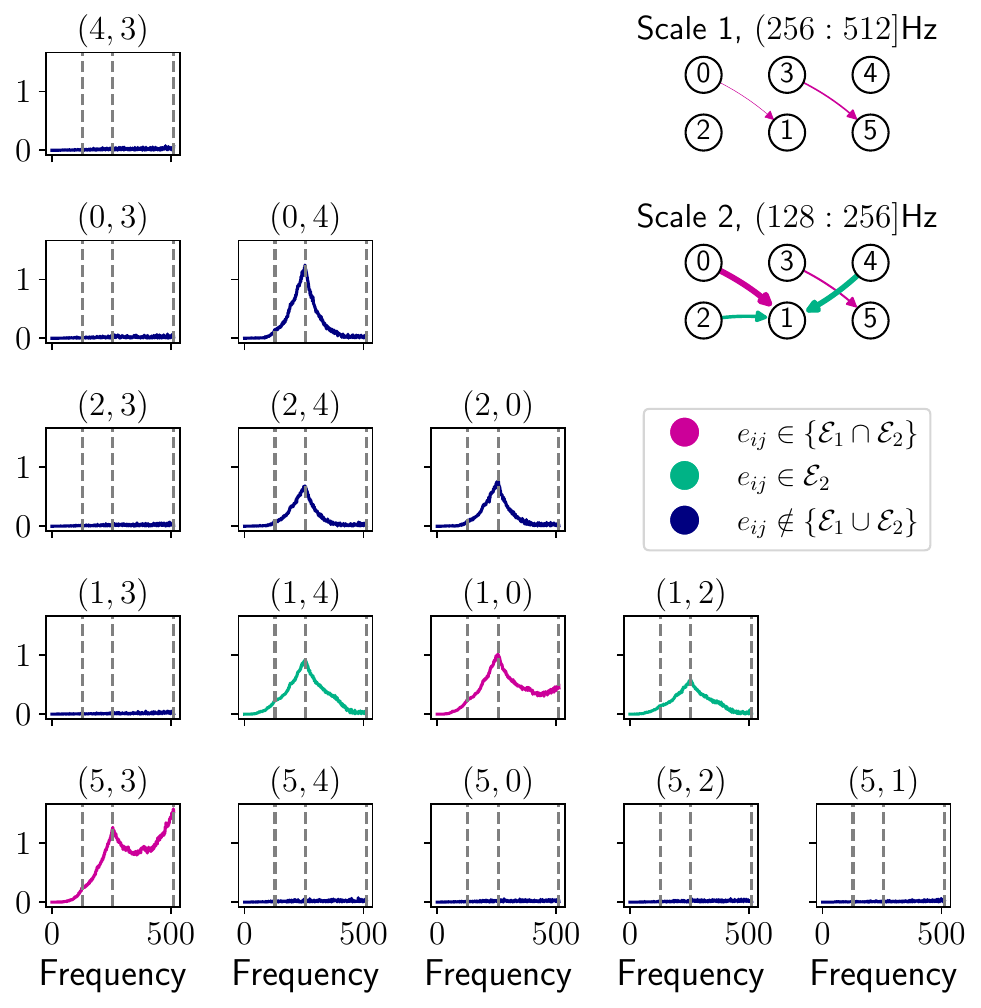}
    \caption{The figure depicts the underlying multiscale causal structure together with the behavior of the strictly lower triangular part of $\mathbf{P}_k$ along frequencies. Different colors are associated with interactions that (i) are not present in the causal structure (blue), (ii) exist within the causal graph related to time scale 2 (green), and (iii) show up in both time scales (magenta). Finally, dashed lines indicate the splitting points of the frequency bands associated with the relevant time scales.}
    \label{fig:P_mdag}
\end{figure}

In this section, we assess and compare the performance of the proposed methods on synthetic data. To generate time series data entailing a block-sparse inverse CSD, we exploit the multiscale linear causal model proposed in~\cite{d'acunto2023learning}. Specifically, we generate $N_{\mathcal{Y}}=10000$ data sets consisting of $N=6$ time series of length $T=1024$ from a multiscale causal structure, where stationary interactions among time series occur only at two time scales, corresponding to the frequency bands $(256:512]$Hz and $(128:256]$Hz, respectively.
Then, we use those $N_{\mathcal{Y}}$ data sets to obtain an accurate estimate of the inverse CSD tensor, from which we retrieve the $K$-PCG where w.l.o.g. we set $K=8$ with equally-sized frequency blocks within the interval $[0:512]$Hz. The cardinalities of the $8$-PCG arc sets are $\{|\mathcal{E}_m|\}_{m \in [K]}=\{0,5,7,7,7,6,4,2\}$.

\Cref{fig:P_mdag} depicts the multiscale causal structure together with the behavior of the strictly lower triangular part of $\mathbf{P}_k$ along frequencies, where we use dashed lines to locate the splitting points of the relevant frequency bands.
Since the inverse CSD tensor concerns partial correlation, we observe additional interactions that are not defined in the underlying causal structure, and which can be understood in light of the \emph{d-separation criterion}~\cite{pearl2009causality}. For instance, the interaction between node $0$ and $4$ follows from the fact that, when we condition on the set of vertices $\mathcal{V}_{0,4}=\{1,2,3,5\}$, the latter also contains node $1$ that is a child of both $0$ and $4$ within the causal structure corresponding to the second time scale. Subsequently, the path $0\rightarrow 1 \leftarrow 4$ activates, and we observe partial correlation.

We compare our proposed methods with different baselines, including the naive estimator $\inIPSDT$ (cf. Sec. \ref{sec:preliminaries}) and the \emph{time series graphical LASSO} algorithm in~\cite{tugnait2022sparse} (TS-GLASSO), which combines the $\ell_{1}$- and $\ell_{21}$-norm regularizations via $\alpha \in [0,1]$ (cf. eq. (41) in \cite{tugnait2022sparse}). Here we test $\alpha=0$ and $\alpha=0.5$. We consider two versions of the CF method: \emph{CF-nz} with $s_m=s=7$, where the user knows only the maximum number of non-null dependencies, and \emph{CF-fk} with $s_m=|\mathcal{E}_m|, \forall m$, where the full knowledge of $\{|\mathcal{E}_m|\}_{m \in [K]}$ is provided. In addition, we evaluate two versions of the IA method: \emph{IA-gs}, which applies $\ell_{21}$-norm regularization without considering distinct frequency blocks (similarly to TS-GLASSO), and \emph{IA-bs}, which divides the frequency domain into $K=3$ blocks with starting frequencies $\{k_m\}=\{0, 64, 448\}$Hz to enforce block-sparsity.
To assess the quality of the learned graphs, we use the \emph{structural Hamming distance} (SHD), which quantifies the number of modifications required to convert the estimated graph into the ground truth graph. The regularization strength parameter $\lambda$ is fine-tuned within the range $[0.001,1]$ based on SHD, for both TS-GLASSO and IA. Furthermore, we set $\eta = 0.01$ in (\ref{eq:prob1}), and we test both strategies in lines 6-10 of \Cref{alg:nonconvex} for the initialization of $\IPSDM_k$. 
Convergence is determined using $\tau_{abs}^p=\tau_{rel}^p=\tau_{abs}^d=\tau_{rel}^d=5 \times 10^{-4}$ for both TS-GLASSO and IA. The learning process of these algorithms is stopped at $2000$ iterations to manage computational time, regardless of convergence conditions.
Convergence plots concerning the tested versions of our IA method are given in the \Cref{subsec:algo}, while the hyper-parameters values used in our simulations are listed in \Cref{tab:hyperparams}.

\begin{figure*}[t!]
    \centering
    \includegraphics[width=1\textwidth]{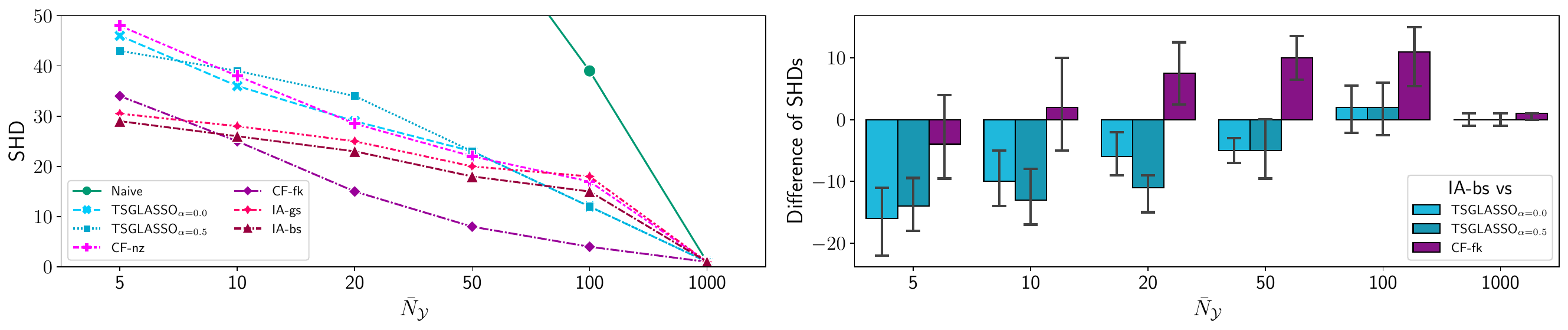}
    \caption{Left: performance in terms of SHD (lower better) for all methods, along $\bar{N}_{\mathcal{Y}}$.
    Right: difference of SHD between IA-bs and TS-GLASSO baseline, and CF-fk.
    Points (line plot) and bars height (bar plot) represent median values obtained over $50$ runs.
    Error bars (bar plot) represent $95\%$ interval.
    For readability, the line plot on the left has been cut at SHD$=50$, while in the bar plot on the right, we omit the naive baseline.}
    \label{fig:exp_synth}
    \vspace{-15pt}
\end{figure*}

We compare the performance of the methods in three settings with varying samples availability: \emph{scarce} ($\bar{N}_{\mathcal{Y}}=5$), \emph{medium} ($\bar{N}_{\mathcal{Y}} \in \{10, 20, 50\}$), and \emph{large} ($\bar{N}_{\mathcal{Y}} \in \{100, 1000\}$). For each setting, we estimate the smoothed periodogram using $\bar{N}_{\mathcal{Y}}$ data sets sampled from the same causal structure. Once obtained $\IPSDT$ from the learning methods, we compute the coherence $\RM_{k}=-\Dhat_k\IPSDM_{k}\Dhat_k$ for all $k \in \positivefrequency$, where $\Dhat_{k}$ is a diagonal matrix built as $\Dtilde_k$. The $8$-PCG is finally obtained by normalizing $\norm{\RM_{(\K_m)}}_2$ between $0$ and $1$ and keeping entries greater than $\bar{r}=0.05$. We repeat this procedure $50$ times for each setting using different data sets sampled from the same causal structure.

\Cref{fig:exp_synth} shows the SHD (left) and the differences in SHD (right) between IA-bs and TS-GLASSO, as well as CF-fk, with respect to $\bar{N}_{\mathcal{Y}}$. The line plot on the left is cut at SHD$=50$ for readability since the naive baseline shows SHD$>80$ for $\bar{N}_{\mathcal{Y}}\in \{5,10,20\}$. In the bar plot on the right, we omit the naive baseline for the same reason.
As we can see from the line plot in \Cref{fig:exp_synth} (left), all methods outperform the naive baseline, retrieving the ground truth when $\bar{N}_{\mathcal{Y}}=1000$. CF-fk, which has complete knowledge of $\{|\mathcal{E}_m|\}_{m \in [K]}$, performs the best overall, followed by IA-bs and IA-gs.
The superior performance of IA-gs over TS-GLASSO, despite not using block-sparsity, is due to the fact that it can deviate from the smoothed periodogram.
As $\bar{N}_{\mathcal{Y}}$ increases, the gap between the IA versions and the TS-GLASSO baselines reduces, while CF-fk keeps showing considerably lower SHD. The bar plot on the right of \Cref{fig:exp_synth} highlights the benefits of using block-sparsity.
Specifically, IA-bs significantly outperforms TS-GLASSO baselines for small-medium $\bar{N}_{\mathcal{Y}}$ values.
Then, in the large sample cases, differences vanish and are not statistically significant at the $95\%$ level.
In addition, despite a considerable disadvantage, IA-bs does not significantly differ from CF-fk for $\bar{N}_{\mathcal{Y}} \in \{5,10\}$, according to error bars. This is due to the low accuracy of the estimate of the inverse smoothed periodogram. Indeed, CF-fk receives the exact number of arcs per frequency band, and its errors are determined by the quality of the input estimate of the inverse smoothed periodogram. Thus, as the estimation improves, the performance gap between the two methods widens and then vanishes at $\bar{N}_{\mathcal{Y}}=1000$.

\subsection{Hyper-parameters}\label{subsec:hyper}
\Cref{tab:hyperparams} reports the list of all the hyper-parameters used by the IA and TS-GLASSO algorithms to obtain the numerical results depicted in \Cref{fig:exp_synth}.

\begin{table}[t]
    \centering
    \caption{Hyper-parameters used in the experimental assessment on synthetic data.}
    \resizebox{1.\columnwidth}{!}{%
    \begin{tabular}{rlrrrrrr}
        \toprule
        $\bar{N}_{\mathcal{Y}}$ & Method & $\lambda$ &init &$\xi_1(t)$&$\xi_2(t)$& $c_1$&$c_2$\\
        \midrule
        5    & IA-bs & 0.500&identity & $\log{t}$& $t$& 0.50&0.99\\
             & IA-gs & 0.700&identity & $\log{t}$& $t$& 0.50&0.99\\
             & $\text{TS-GLASSO}_{\alpha=0.0}$ & 1.000 &identity& N/A   & N/A   & N/A   &N/A   \\
             & $\text{TS-GLASSO}_{\alpha=0.5}$ & 0.700 &identity& N/A   & N/A   & N/A   &N/A   \\
             \midrule
        10   & IA-bs & 0.500&identity & $\log{t}$& $t$& 0.50&0.99\\
             & IA-gs & 0.500&identity & $\log{t}$& $t$& 0.50&0.99\\
             & $\text{TS-GLASSO}_{\alpha=0.0}$ & 1.000 &identity& N/A   & N/A   & N/A   &N/A   \\
             & $\text{TS-GLASSO}_{\alpha=0.5}$ & 0.500 &identity& N/A   & N/A   & N/A   &N/A   \\
             \midrule
        20   & IA-bs & 0.300&identity & $\log{t}$& $t$& 0.50&0.99\\
             & IA-gs & 0.300&identity & $\log{t}$& $t$& 0.50&0.99\\
             & $\text{TS-GLASSO}_{\alpha=0.0}$ & 1.000 &identity& N/A   & N/A   & N/A   &N/A   \\
             & $\text{TS-GLASSO}_{\alpha=0.5}$ & 0.300 &identity& N/A   & N/A   & N/A   &N/A   \\
             \midrule
        50   & IA-bs & 0.500&identity & $\log{t}$& $t$& 0.99&0.99\\
             & IA-gs & 0.500&identity & $\log{t}$& $t$& 0.99&0.99\\
             & $\text{TS-GLASSO}_{\alpha=0.0}$ & 1.000 &identity& N/A   & N/A   & N/A   &N/A   \\
             & $\text{TS-GLASSO}_{\alpha=0.5}$ & 0.070 &identity& N/A   & N/A   & N/A   &N/A   \\
             \midrule
        100  & IA-bs & 0.300&inverse & $\log{t}$& $t$& 0.90&0.99\\
             & IA-gs & 0.500&identity& $\log{t}$& $t$& 0.99&0.99\\
             & $\text{TS-GLASSO}_{\alpha=0.0}$ & 0.100 &identity& N/A   & N/A   & N/A   &N/A   \\
             & $\text{TS-GLASSO}_{\alpha=0.5}$ & 0.030 &identity& N/A   & N/A   & N/A   &N/A   \\
             \midrule
        1000 & IA-bs & 0.010 &inverse & $\log{t}$& $\sqrt{t}$& 0.50&0.99\\
             & IA-gs & 0.050&inverse & $\log{t}$& $\sqrt{t}$& 0.50&0.99\\
             & $\text{TS-GLASSO}_{\alpha=0.0}$ & 0.010 &identity& N/A   & N/A   & N/A   &N/A   \\
             & $\text{TS-GLASSO}_{\alpha=0.5}$ & 0.001 &identity& N/A   & N/A   & N/A   &N/A   \\
        \bottomrule
        \end{tabular}}
    \label{tab:hyperparams}
\end{table}

Looking at the behavior of $\lambda$ along $\bar{N}_{\mathcal{Y}}$, we see that as the number of available samples increases, $\lambda$ reduces for the considered methods. This is consistent with the increase in information that can be exploited by the algorithms.
Regarding the initialization of $\{\IPSDM_k\}$, we see that in the case of scarce and medium sample availability, the identity matrix is preferred by the IA method. Instead, in the case of large sample availability, the best initialization is the inverse of the smoothed periodogram. This is consistent with the fact that the quality of the smoothed periodogram increases with $\bar{N}_{\mathcal{Y}}$.
Concerning the stepsize rules hyper-parameters (i.e., $\xi_1(t)$, $\xi_2(t)$, $c_1$, $c_2$), we see that for a fixed number of maximum iterations for reaching convergence, as $\bar{N}_{\mathcal{Y}}$ increases, we can use slower decaying rules for minimizing the SHD. This means that in the scarce sample scenario, the IA method takes large steps at the beginning, but then needs to quickly reduce the stepsize to achieve better solutions. Instead, in the opposite scenario, due to the improvement in the estimation of the smoothed periodogram, the method can proceed more quickly in updating the optimization variables, and make the SHD vanish.
\section{Practical applications}\label{sec:rwdata}

The versatile nature of the proposed methods makes them relevant for a wide range of application domains. In this section, we outline practical scenarios in which our methods are suitable, but not limited to.

\begin{figure*}[tb]
    \centering
    \begin{subfigure}[b]{\textwidth}
        \centering
        \includegraphics[width=\textwidth]{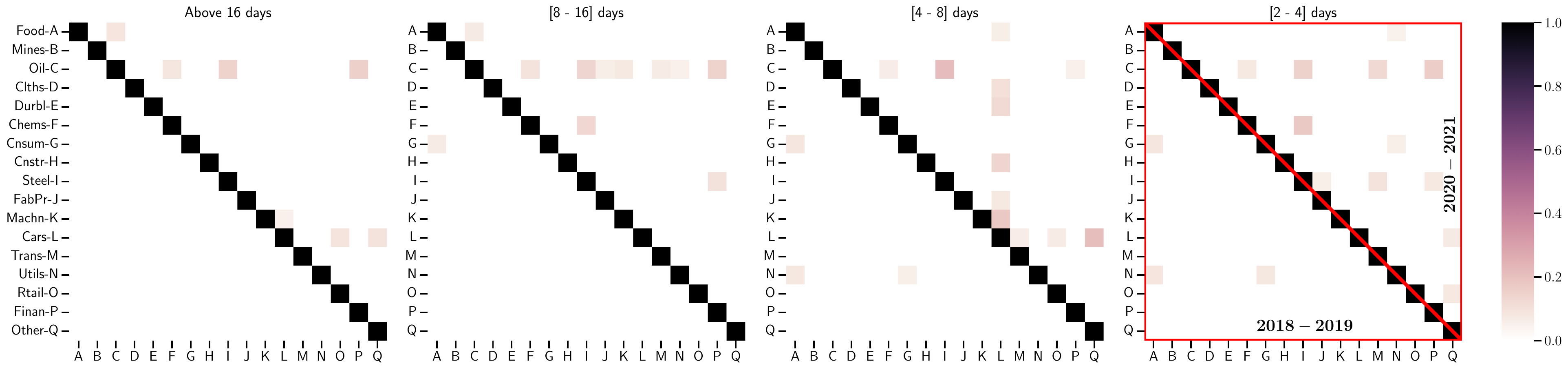}
        \caption{IA method.}
        \label{subfig:exp_fin_ours}
    \end{subfigure}
    \vfill
    \begin{subfigure}[b]{\textwidth}
        \centering
        \includegraphics[width=\textwidth]{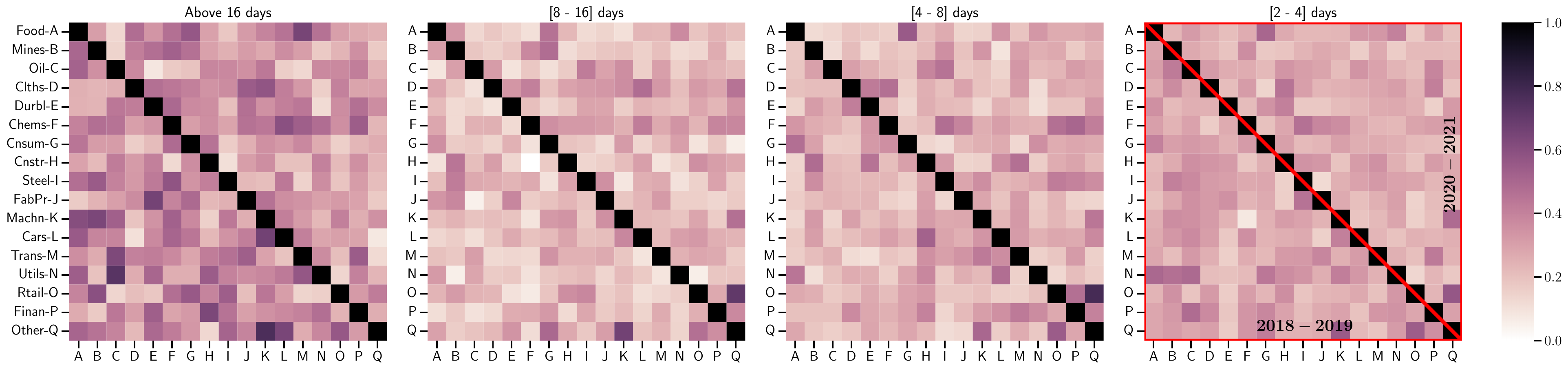}
        \caption{Naive method.}
        \label{subfig:exp_fin_naive}
    \end{subfigure}
    \caption{The 4 layers of the block-sparse multi-frequency partial correlation graph ($4$-PCG) among the time series of $17$ industry portfolios, where each layer corresponds to a different frequency band, provided by \subref{subfig:exp_fin_ours}) the IA method and \subref{subfig:exp_fin_naive}) the naive baseline. The lower triangular part of each matrix reports the $2018-2019$ period, while the upper triangular part the $2020-2021$ period.}
    \label{fig:exp_fin}
\end{figure*}

\subsection{Analysis of financial portfolios}\label{subsec:findata}
We study the partial correlations among daily returns of $17$ US industry equity portfolios from $5$ January 2018 to $31$ December 2021.
Daily returns were gathered from the \href{http://mba.tuck.dartmouth.edu/pages/faculty/ken.french/}{Fama-French repository}.
These portfolios are made by equities traded on the NYSE, AMEX, and NASDAQ stock exchanges.
Before applying the IA method, we make the time series of linear returns zero-mean. Hence, we divide the data set into two parts, $2018$-$2019$ and $2020$-$2021$, consisting of $T_1 \in \N$ and $T_2 \in \N$ days, respectively.
At this point, we estimate the smoothed periodogram corresponding to each period.
Here, we use a Hanning window with half-window size $\varsigma=\lfloor \sqrt{T_{\mathsf{min}}} \rfloor$, where $T_{\mathsf{min}} = \textrm{min}(T_1,T_2)$.
Such a choice is consistent with \cite{bohm2009shrinkage,sun2018large}, however, setting $\varsigma$ in a data-driven would also be possible \cite{ombao2001simple}.
The complete list of used hyper-parameters for the IA method is provided in the IPython notebook named \texttt{2.0\_real-world\_case\_study.ipynb}, accessible from our \href{https://github.com/OfficiallyDAC/BSPCG}{repository}.

We split the period into two parts, $2018$-$2019$ and $2020$-$2021$. The first period is characterized by strong economic growth, low unemployment, and moderate inflation.
However, trade tensions and geopolitical uncertainties were notable factors that influenced economic conditions during this period. The second period is characterized by a worsening macroeconomic environment, the outbreak and subsequent spread of the Covid-19 pandemic. Given the absence of prior knowledge, we apply the IA method to retrieve the $4$-PCG associated with $K=4$ frequency bands, capturing interactions within $[2-4]$, $[4-8]$, $[8-16]$, and above $16$ days. For example, the $[2-4]$ days block pertains to partial correlations occurring at a time resolution of $2$ consecutive days, up to a time resolution of $4$ days (i.e., roughly a business week). 

\Cref{subfig:exp_fin_ours} illustrates the 4 layers of the block-sparse multi-frequency partial correlation graph, learned by the IA method, where each layer corresponds to a different frequency band. From \Cref{subfig:exp_fin_ours}, we notice that over the first period (depicted in the lower triangular part of the matrices), few partial correlations occur, mainly at high-frequency bands. These dependencies are weak, and involve the food production industry (Food), businesses and industries focused on both essential and non-essential products (Cnsum), and energy services (Utils). These industries refer to segments of the economy that are relatively resistant to economic downturns and tend to remain stable during periods of economic recession or market volatility. Consequently, they are considered by investors to be defensive, meaning that they can provide stability during periods of economic uncertainty. On the contrary, during the second period (depicted in the upper triangular part of the matrices), partial correlations among portfolios are denser and spread over multiple frequency bands. These dependencies mainly relate to portfolios of energy resources (Oil); primary metal, iron and steel industries (Steel); and the automotive sector (Cars). In addition, while dependencies involving Oil occur over all frequency bands, those relating to Cars and Steel are localized within the $[4-8]$ and $[2-4]$ days time resolutions, respectively.

Our findings are justified by the economic conjuncture of the analyzed periods: the first characterized by a growing economy supported by fiscal policy, but also by the US-China trade war and increasing geopolitical uncertainties; the second by a worsening macroeconomic environment and the spread of Covid-19. In particular, the partial correlations learned over $2018$-$2019$ are the signs of trade tensions, economic uncertainty, and concerns about global economic growth.
These factors led investors to turn to defensive industry sectors, generating dependencies between Food, Cnsum, and Util industry portfolios.
Furthermore, during the first part of 2020, the oil market crashed~\cite{gharib2021impact}, and the natural gas market experienced its largest recorded demand decline in the history~\cite{iea2020}. Afterward, an upward trend started, always in a scenario of economic uncertainty. Thus, the partial correlations learned over $2020$-$2021$ serve as a clear indicator of the economic recession's progression, which has been further hastened by the widespread impact of the Covid-19 pandemic.

Finally, for comparison purposes, \Cref{subfig:exp_fin_naive} shows that the solution obtained through the naive estimator is notably denser than the one learned by our IA method. This observation aligns with the findings presented in \Cref{sec:synth_exp} regarding synthetic data. In scenarios with limited samples, the naive baseline consistently returns the densest solutions and exhibits the worst SHD values. Consequently, the solution provided by the naive estimator lacks valuable insights for effectively discriminating partial correlations among time series in different market scenarios and frequency bands. 

\subsection{Model-plant mismatch detection for industries}\label{subsec:MPMdata}
Model predictive control (MPC, \cite{garcia1989model, kouvaritakis2016model}) is a control strategy widely used in industrial processes, where a predictive model of the system (the plant) is employed to optimize control actions over a finite prediction horizon. However, in practice, there can be discrepancies between the predictive model and the actual plant behavior, known as model-plant mismatch (MPM).
The analysis of partial correlations between the model residuals and inputs is promising for the detection and diagnosis of MPM \cite{badwe2009detection,yerramilli2018detection}.
In this context, our methods can be applied to the identification of MPM at different temporal resolutions for frequency-based control strategies. For example, if the predictive model accurately represents low-frequency dynamics but fails to capture high-frequency ones in the plant, MPC controllers can prioritize control actions in the relevant frequency bands to mitigate discrepancies effectively.

\subsection{Climate data analysis}\label{subsec:Climatedata}
 The identification and quantification of the relationships between climate variables in different regions, such as the El Niño-Southern Oscillation (ENSO) and its teleconnections with global weather patterns \cite{wallace1981teleconnections}, is a longstanding important task in climate research.
 In this context, partial correlation analysis has been extensively used \cite{tirabassi2017inferring,rios2015links,zhao2020time}, and it is important to discriminate across temporal resolutions \cite{smith1993long,rodo2006new}. Another example concerns the effects of climate at different time scales on epidemic patterns (cf. \cite{cazelles2023disentangling} and references therein). Thus, our methods represent versatile alternatives for researchers in this field to discriminate partial correlations at the relevant time resolutions.

\subsection{Biomedical signal processing}\label{subsec:Biodata}
Algorithms for learning partial correlation among time series data in biomedical signal processing offer valuable insights into the dynamics of physiological systems and can advance our understanding in several areas, such as brain functions \cite{greicius2003functional,sun2004measuring}, and heart rate variability \cite{li2019spectral}. By analyzing partial correlations across different frequency bands, researchers can unveil complex interactions between physiological signals, identify biomarkers of health and disease, and develop personalized interventions to improve patient outcomes. Additionally, the multi-frequency partial correlation graph learned using our methods can be useful as an input for downstream computational models, for instance, to augment existing brain-fingerprinting approaches \cite{finn2015functional}.
\section{Conclusion}\label{sec:conclusion}
In this work, we have proposed novel methodologies to learn block-sparse multi-frequency PCGs, with the aim of discerning the frequency bands where partial correlation between time series occurs. Specifically, we devise two nonconvex methods, named CF and IA. The former has a closed-form solution and assumes prior knowledge of the number of arcs of the multi-frequency PCG over each layer. The latter jointly learns the CSD matrices and their inverses through an iterative procedure, which does not require any prior knowledge about (block-)sparsity. The IA method efficiently combines SCA and ADMM iterations, providing a robust and highly performing recursive algorithm for learning block-sparse multi-frequency PCGs. Experimental results on synthetic data show that our proposals outperform the state of the art, thus confirming that block-sparsity regularization improves the learning of the $K$-PCG. Interestingly, our proposals are more robust to estimation errors when the number of available samples is scarce or modest, without sacrificing performance in large sample settings. Remarkably, the IA method performs well also with few available samples, even without knowing the measure of sparsity, thanks to its ability to deviate from the error-prone smoothed periodogram.
Finally, we outlined a series of practical applications for which our methods are relevant (but not limited to), across multiple domains. Specifically, the financial case study highlighted that learning a block-sparse multi-frequency PCG reveals valuable information about the partial correlation between time series at various frequency blocks in different market conditions, which is also supported by several economic conjectures of the analyzed periods.

\bibliographystyle{MyIEEE}
\bibliography{bibliography}

\end{document}